\documentclass[journal]{IEEEtran}

\usepackage{cite}
\usepackage{amsmath}
\usepackage{amsfonts}
\usepackage{url}
\usepackage{graphicx}
\usepackage{subfigure}
\usepackage{caption}
 \usepackage{epstopdf}
\usepackage{multirow}
\usepackage{color,xcolor}
\usepackage{cite}
\usepackage{CJK}
\usepackage{comment}
\usepackage{amsthm}
\newtheorem{theorem}{Theorem}

\usepackage{verbatim}
\usepackage{algorithm}
\usepackage{algpseudocode}

\usepackage{bbding}
\usepackage{amssymb}
\usepackage{subcaption}

\usepackage{tabularx}

\def\Dset{\mathcal{D} }
\def\Tset{\mathcal{T} }
\def\Pset{\mathcal{P} }
\def\Qset{\mathcal{Q} }
\def\Lset{\mathcal{L} }

\def\x{\mathbf{x}}

\def\y{\mathbf{y}}

\def\z{\mathbf{z}}
\def\w{\mathbf{w}}

\hyphenation{op-tical net-works semi-conduc-tor}

\begin{document}

\title{From Images to Point Clouds: An Efficient Solution for Cross-media Blind Quality Assessment without Annotated Training}

\author{Yipeng Liu, Qi Yang, Yujie Zhang, Yiling Xu, Le Yang, Zhu Li\thanks{This paper is supported in part by National Natural Science Foundation of China (62371290), National Key R\&D Program of China (2024YFB2907204), the Fundamental Research Funds for the Central Universities of China, and STCSM under Grant (22DZ2229005). The corresponding author is Yiling Xu(e-mail: yl.xu@sjtu.edu.cn).}
\thanks{Y. Liu, Y. Zhang and Y. Xu are from Cooperative Medianet Innovation Center, Shanghai Jiaotong University, Shanghai, 200240, China, (e-mail: liuyipeng@sjtu.edu.cn, yujie19981026@sjtu.edu.cn, yl.xu@sjtu.edu.cn)}
\thanks{Q. Yang and Z. Li are from University of Missouri-Kansas City, Missouri, US, (e-mail: qiyang@umkc.edu, lizhu@umkc.edu)}
\thanks{L. Yang is from the Department of electrical and computer engineering, University of Canterbury, Christchurch 8041, New Zealand, (e-mail: le.yang@canterbury.ac.nz)}
\thanks{Corresponding author: Y. Xu}
}

\maketitle

\begin{abstract}
We present a novel quality assessment method which can predict the perceptual quality of point clouds from new scenes without available annotations by leveraging the rich prior knowledge in images, called the Distribution-Weighted Image-Transferred Point Cloud Quality Assessment (DWIT-PCQA). Recognizing the human visual system (HVS) as the decision-maker in quality assessment regardless of media types, we can emulate the evaluation criteria for human perception via neural networks and further transfer the capability of quality prediction from images to point clouds by leveraging the prior knowledge in the images. Specifically, domain adaptation (DA) can be leveraged to bridge the images and point clouds by aligning feature distributions of the two media in the same feature space. However, the different manifestations of distortions in images and point clouds make feature alignment a difficult task. To reduce the alignment difficulty and consider the different distortion distribution during alignment, we have derived formulas to decompose the optimization objective of the conventional DA into two suboptimization functions with distortion as a transition. Specifically, through network implementation, we propose the distortion-guided biased feature alignment which integrates existing/estimated distortion distribution into the adversarial DA framework, emphasizing common distortion patterns during feature alignment. Besides, we propose the quality-aware feature disentanglement to mitigate the destruction of the mapping from features to quality during alignment with biased distortions. Experimental results demonstrate that our proposed method exhibits reliable performance compared to general blind PCQA methods without needing point cloud annotations.

\end{abstract}


\begin{IEEEkeywords}
Cross-media transfer, blind quality assessment, learning-based metric, no-annotation training
\end{IEEEkeywords}

\IEEEpeerreviewmaketitle

\section{Introduction}\label{sec:introduction}

Point clouds have demonstrated remarkable performance in various applications, including augmented reality \cite{lim2020Augmented}, automatic driving \cite{ChenLFGW:20}, and industrial robots \cite{Rusu2011PCL}. Ensuring the accuracy of point cloud quality assessment (PCQA) is of paramount importance for delivering high-quality service for various human vision tasks. PCQA can be divided into subjective and objective methods. Although subjective methods can provide reliable quality prediction, they can be expensive in terms of time, cost, and testing conditions \cite{Mantiuk2012QAsurvey}. Therefore, the objective PCQA methods have become a hotspot in recent research, which is the main focus of this work.

\begin{figure}[t]
\setlength{\abovecaptionskip}{0.cm}
\setlength{\belowcaptionskip}{-0.cm}
	\centering
		\includegraphics[width=1\linewidth]{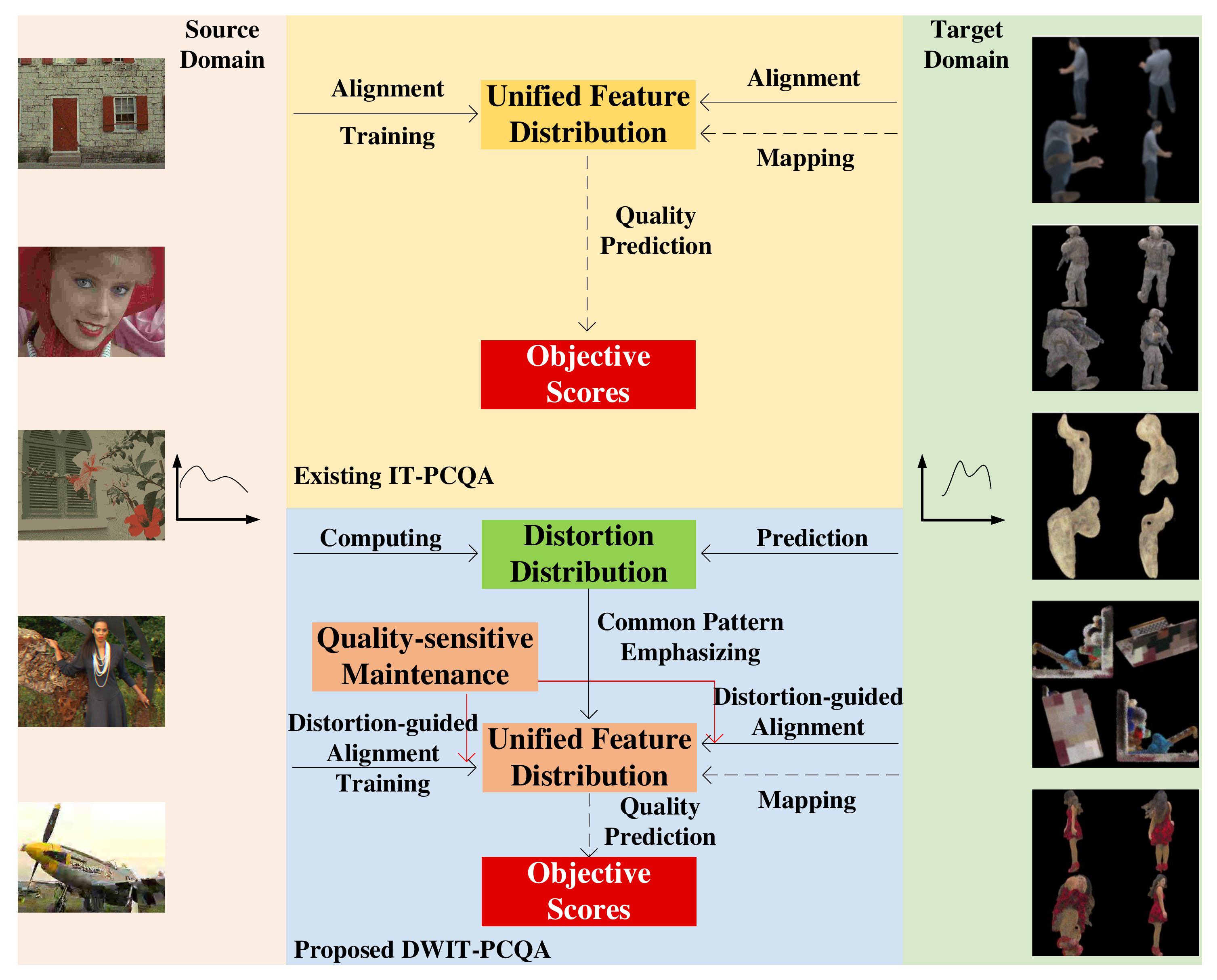}
	\caption{Comparison with existing ideas. The existing IT-PCQA directly aligns and maps the features into the unified distribution by domain adaptation (DA). The proposed DWIT-PCQA considers differential distortion distributions between the source domain and the target domain, and emphasizes common distortion patterns. Besides, contrastive learning is employed to promote the quality-sensitive representation during distortion-guided alignment. The dashed line represents the testing flow.}
	\label{fig:alignment}
\end{figure}

Objective methods can be categorized as full-reference (FR), reduced-reference (RR), and no-reference (NR) methods \cite{liu2024D3PCQA,liu2024VCIP}. FR and RR metrics require complete or partial reference information that is not available in many practical applications \cite{liu2022VCIP}, such as point clouds rendered after transmission \cite{LQrate} and captured in the wild. Therefore, a more urgent requirement for PCQA lies in the development of NR methods. An NR metric is usually based on the natural scene statistic (NSS) \cite{scstmm} or the deep neural network (DNN)~\cite{dnntip}. Both methods need to analyze the characteristics of the point cloud based on a large number of labeled samples, i.e., diverse distorted samples with mean opinion score (MOS). However, compared to well-studied image quality assessment (IQA), current PCQA databases are usually limited in scale and scene due to the difficulty of point cloud production and complex subjective experiments. For example, current widely-used PCQA datasets like PointXR \cite{Alexiou2020PointXR}, IRPC \cite{Javaheri2019IRPC}, SJTU-PCQA \cite{yang2020predicting}, and WPC \cite{Su2019WPC} only contain hundreds of labeled samples distorted from several handcrafted point clouds.

A solution to the above problem is to inherit prior knowledge from other mature research \cite{Yang2022ITPCQA}, e.g., IQA \cite{Li2024DA}, to facilitate the PCQA study. The feasibility comes from two key factors. First,  IQA databases, such as LIVE~\cite{LIVE}, CSIQ~\cite{Sheikh2006CSIQ}, TID2013~\cite{tid2013}, KonIQ \cite{Hosu2020konIQ} and KADID \cite{Lin2019KIAID10k}, which contain tens of thousands of samples, offer extensive and accurate subjective ratings and various capture scenes. Second, previous research has established strong connections between 2D and 3D perception, evident in tasks like reconstruction (e.g., from image to 3D object~\cite{2D3Dre}) and tracking ~\cite{2D3Dtracking}. Since the human visual system (HVS) serves as the universal evaluator, the perceptual characteristics observed in IQA likely exhibit homogeneity with those of PCQA. Considering the potential relationship between images and point clouds, it is promising to employ prior knowledge in IQA to guide the PCQA task.

IT-PCQA is the first and only existing attempt \cite{Yang2022ITPCQA} to solve NR-PCQA utilizing prior knowledge from IQA by domain adaptation (DA) \cite{NIPS2016_45fbc6d3, pmlr-v37-ganin15,Tzeng_2017_CVPR}. It uses DA to map the features of images and point clouds to the unified distribution by confusing a discriminator to make it unable to distinguish the features from the images or point clouds, and then the labels of images are used to train a quality regression, as shown in Fig. \ref{fig:alignment}. Intuitively, it is a prior knowledge transfer from synthesized distorted images to synthesized distorted point clouds (abbreviated as \textbf{image-to-point cloud transfer} in the following text). We inherit this transfer research, as existing PCQA research has focused mainly on synthesized distortions. However, the large domain gap makes the performance of such an attempt far from satisfactory. Although image distortions and partial distortions in point clouds share certain similarities in color/attribute~\cite{LIVE, tid2013}, their own unique distortions result in different distortion distributions, diminishing transfer performance after alignment. Fig. \ref{fig:tsne} shows the low-dimensional t-SNE plot of different distortion distributions extracted from images (TID2013 \cite{tid2013}) and point clouds (SJTU-PCQA \cite{yang2020predicting}), indicating that direct alignment is not the most appropriate choice especially due to the existence of some outlier distortions. Further, Fig. \ref{fig:deltaplcc} shows the impact of different distortions in source domain (TID2013) on the performance of IT-PCQA in target domain (SJTU-PCQA) by removing distortions with the median performance one by one. Despite the mutual influence between the samples, Fig. \ref{fig:deltaplcc} demonstrates that the presence of some distortions in the source domain suppresses the DA performance in the target domain. 

\begin{figure}[t]
\setlength{\abovecaptionskip}{0.cm}
\setlength{\belowcaptionskip}{-0.cm}
	\centering
		\includegraphics[width=1\linewidth]{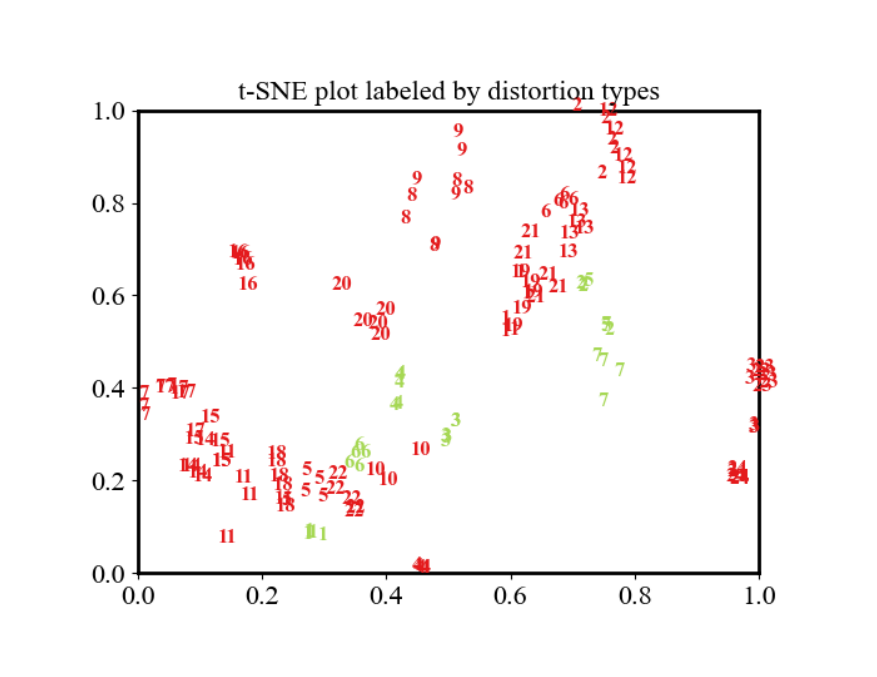}
	\caption{Different distortion distributions of images and point clouds. Features of images and projections are extracted from the pre-trained backbone of HyperIQA \cite{Su2020Hyper}. \textbf{Red} represents the feature distribution of images (TID2013 \cite{tid2013}), while \textbf{green} represents the feature distribution of point cloud projections (SJTU-PCQA \cite{yang2020predicting}). The numbers in the figure represent the distortion types numbering in their original papers.}
	\label{fig:tsne}
\end{figure}

\begin{figure}[pt]
\setlength{\abovecaptionskip}{0.cm}
\setlength{\belowcaptionskip}{-0.cm}
	\centering
		\includegraphics[width=1\linewidth]{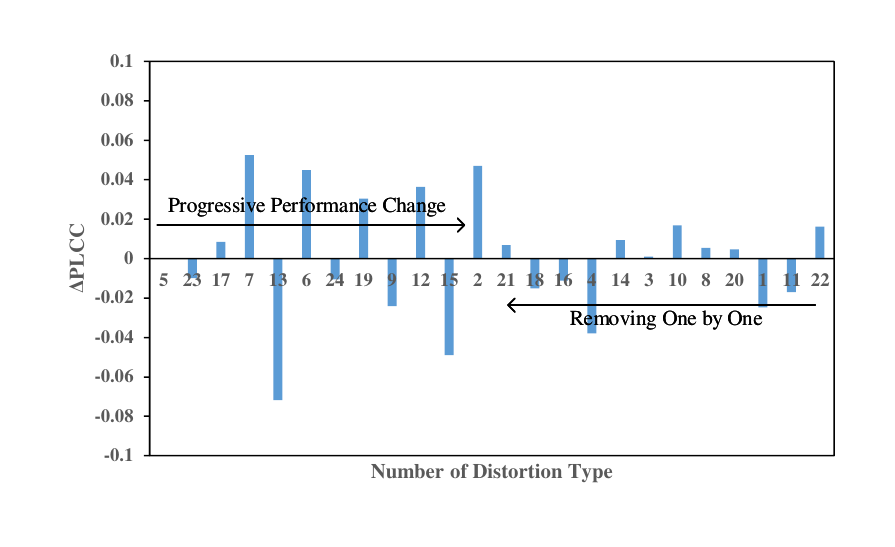}
	\caption{Performance change with each removal of distortion type in source domain of IT-PCQA (TID2013 as the source domain, and SJTU-PCQA as the target domain). The distortion type number is the same as in the original paper \cite{tid2013}, shown in Table \ref{tab:tiddistortions}. Positive $\Delta PLCC$ represents positive performance improvement after removing this distortion, indicating that the removed distortion has a potential negative impact on DA.}
	\label{fig:deltaplcc}
\end{figure}

\begin{table}[htbp]
  \centering
  \caption{Distortion types in the TID2013 dataset \cite{tid2013} and their numbering.}
    \begin{tabularx}{\linewidth}{cXcX}
    \hline
    \multicolumn{2}{c}{Distortion Types} & \multicolumn{2}{c}{Distortion Types} \\
    \hline
    1     & Additive Gaussian noise & 13    & JPEG2000 transmission errors \\
    2     & Additive noise  & 14    & Non eccentricity pattern noise \\
    3     & Spatially correlated noise & 15    & Local block-wise distortions \\
    4     & Masked noise & 16    & Mean shift (intensity shift) \\
    5     & High frequency noise & 17    & Contrast change \\
    6     & Impulse noise & 18    & Change of color saturation \\
    7     & Quantization noise & 19    & Multiplicative Gaussian noise \\
    8     & Gaussian blur & 20    & Comfort noise \\
    9     & Image denoising & 21    & Lossy compression of noisy images \\
    10    & JPEG compression & 22    & Image color quantization with dither \\
    11    & JPEG2000 compression & 23    & Chromatic aberrations \\
    12    & JPEG transmission errors & 24    & Sparse sampling and reconstruction \\
    \hline
    \end{tabularx}%
  \label{tab:tiddistortions}%
\end{table}%




The suppression comes from two aspects: difficult feature alignment and failed quality-sensitive maintenance. Specifically, given extracted features $\z$ and quality $\y$, different distortion manifestations lead to disparate low-level feature distributions $\Dset(\z)$ \cite{liu2019lowlevel}.  The bias in $\Dset(\z)$ caused by mismatched marginal distortion distortions $\Dset(\y_d)$ further results in the disparity of feature likelihood functions with respect to perceptual quality between the source domain $\Dset_S(\z|\y)$ and the target domain $\Dset_T(\z|\y)$. The likelihood conflict makes the DA network for feature alignment, which aims at $\min||\mathcal{D}_S(\z \mid \y) - \mathcal{D}_T(\z \mid \y)||$), difficult to converge and find a common quality mapping. Besides, due to the existence of different feature distributions between the source domain $\Dset_S(\z)$ and the target domain $\Dset_T(\z)$, direct cross-domain alignment may destroy the feature-to-quality mapping $\z_T \rightarrow \y$ in the target domain under $\z_S \rightarrow \y$ is well fitted. In other words, the alignment may suppress the quality-sensitive representation of the aligned features in the target domain. All these issues require us to emphasize common distortion patterns during cross-domain alignment.





In this paper, we propose a novel domain-transfer quality assessment framework called distribution-weighted image-transferred point cloud quality assessment (DWIT-PCQA) to solve the above problems, as shown in Fig. \ref{fig:alignment}. Specifically, to reduce alignment difficulty and emphasize common distortion patterns during cross-domain alignment, we decompose the optimization objective of the conventional DA into two suboptimization functions. The first suboptimization function constructs a \textbf{distortion-based conditional discriminator} to perform distortion-guided importance-weighted alignment in feature distributions, which emphasizes the cross-domain common distortion patterns, $\Dset_S(\y_d) \cap \Dset_T(\y_d)$, to align the feature likelihood functions with respect to common distortions $\Dset(\z|\y_d)$. The second suboptimization function constructs a \textbf{quality-aware feature disentanglement} to maintain the quality-sensitive components during alignment with biased distortion distributions. Considering that the perceptual quality is not equivalent to the distortion, only aligning the feature distribution based on distortion may suppress the feature-to-quality mapping in the target domain. The proposed feature disentanglement harmonizes mismatched feature likelihood functions with respect to distortion $\Dset(\z|\y_d)$ and quality $\Dset(\z|\y)$, by maintaining quality-sensitive components related to common distortions through contrastive learning. Finally, the point cloud quality can be predicted by using image annotations to make the aligned re-weighting features quality-aware while the quality scores of point clouds are not needed for training.

The contributions of this paper are summarized as follows:

\begin{itemize}
\item We propose a novel domain-transfer quality assessment method called DWIT-PCQA which can predict the perceptual quality of point clouds by leveraging the prior knowledge of images.

\item We decompose the conventional optimization objective of DA into two suboptimization problems: distortion-guided feature alignment and quality-aware feature disentanglement, which handles domain discrepancy in transfer-based quality assessment. 


\item  The proposed DWIT-PCQA shows reliable performance under training without point cloud annotations, which can promote the wider application of PCQA. 

\end{itemize}

The remainder of this paper is organized as follows. The related work is discussed in Section \ref{sec:relatedwork}. Section \ref{sec:Reformulation} derives the functionality of each module by decomposing the conventional DA optimization objective. Section \ref{sec:network} presents the implementation of the proposed DWIT-PCQA, with its performance evaluation given in Section \ref{sec:experiments}. Finally, the conclusion is drawn in Section \ref{sec:conclusion}.

\section{Related Works}
\label{sec:relatedwork}

\subsection{Point Cloud Quality Assessment}
For PCQA, the FR metric is first studied due to its urgent requirement for point cloud compression (PCC). Moving Picture Experts Group (MPEG) has applied point-to-point (p2point)~\cite{Mekuria2016Evaluation}, point-to-plane (p2plane) ~\cite{tian2017geometric} and PSNRyuv~\cite{torlig2018novel} in PCC standardization. Further subjective experiment \cite{alexiou2017subjective,alexiou2017performance} demonstrate the unstable performance of these point-wise metrics when facing multiple types of distortion although with low computation complexity. Therefore, other PCQA metrics are proposed that take HVS-based features (e.g., structural features, curvature statistics and color lightness) into consideration and realize more robust performance \cite{meynet2020pcqm,yang2020inferring,Zhang2021MSGRAPHSIM,yang2021MPED,javaheri2021PTD,alexiou2018pointangular,javaheri2020haus,viol2020acolor,Alexiou2020pointSSIM, javaheri2021JPC}.

For NR-PCQA, \cite{Tao2021PMBVQA} employed multi-scale feature fusion to predict the quality of point clouds. \cite{Chetouani2021nrpcqa} split point clouds into local patches and used low-level patch-wise features to train the network. \cite{Liu2021PQANet} proposed to use multiview projection and distortion classification information to enhance quality prediction. \cite{Chai2024PlainPCQA} calculated the importance weights of projection patches based on information validity and constructed the quality description that connects 2D and 3D feature representations. \cite{liu2020LSPCQA} proposed to leverage pseudo MOSs and sparse convolution network to learn quality representation. \cite{Fan2022Video} and \cite{Zhang2022Video} introduced video quality assessment methods into PCQA by integrating point cloud projections into videos. \cite{shan2022GPANet} predicted the quality of point clouds using anti-perturbation features extracted from a graph neural network. \cite{Wu2024Hypergraph} introduced hypergraph learning into PCQA, leveraging the interactive information among vertices for quality representation. \cite{Zhang2022MMPCQA} leverage both point cloud projection and raw 3D data to extract integrated quality features. 

All the above methods require large-scale training data in the same scene as the test samples. This requirement is difficult to meet in many cases, such as for unknown testing scenes. In this work, we propose to use DA to connect the point cloud quality and the prior knowledge in other media formats (e.g., images). Considering the common judge of perceptual quality for different forms of media (e.g., image, video, and point cloud) applies to HVS, the prior knowledge in IQA can reveal the characteristics of HVS and contribute to the quality assessment of point clouds.

\subsection{Domain Adaptation}

DA aims to transfer knowledge from a labeled source domain to a related target domain, minimizing the need for costly labeled data in the target domain. DA typically aligns feature distributions between domains by minimizing discrepancies such as maximum mean discrepancy (MMD)~\cite{Tzeng2014MMD}, correlation alignment~\cite{Sun2016Corre} and Kullback Leibler divergence~\cite{Zhuang2015kl}. In recent years, adversarial DA, which reduces domain discrepancy through an adversarial objective concerning a domain discriminator ~\cite{Tzeng_2017_CVPR}, has become increasingly popular and is widely applied in visual tasks such as image classification and recognition  \cite{pmlr-v37-ganin15,NIPS2016_45fbc6d3}.

Currently, there are a few works that introduce DA into IQA. \cite{ChenDA} used MMD as a loss function to reduce domain discrepancy for screen content image quality prediction by treating natural images as the source domain. \cite{Chen2021DA} performed progressive alignment based on the divided confident and non-confident subdomains of target domains. \cite{Lu2022StyleAM} proposed the StyleAM to introduce style feature space into feature alignment for NR-IQA. \cite{Li2024DA} filtered source domain images based on the similarity between the source domain and the target domain for alignment but introduced the extra conversion error.

The image-to-point cloud transfer is a more challenging task. Some researches reveal the potential perceptual connection between images and point clouds, e.g., \cite{daICCVws} which used the bird's view of a point cloud image to enhance the original LiDAR point cloud data via DA, and \cite{PCRE,PCREG,Li2023Reconstruction} which studied 3D reconstruction and recognition based on single or multiview images. For image-to-point cloud transfer quality prediction, the distortion discrepancy makes the performance of the only existing attempt IT-PCQA \cite{Yang2022ITPCQA} which conducts the direct cross-domain feature alignment by DA unsatisfactory. Therefore, to handle the distortion discrepancies in image-to-point cloud transfer, we introduce the distortion-guided alignment in the DA optimization objective and meanwhile reduce the conversion error between distortion and perceptual quality using contrastive learning.

\section{Problem Formulation}
\label{sec:Reformulation}

In unsupervised domain adaption, the source and target domain are given as $\Pset\{(\x_i^s,\y_i^s)\}_{i=1}^{n_s}$ and $\Qset\{\x_j^t\}_{j=1}^{n_t}$, where $\x_i^s$ and $\x_j^t$ represent individual samples in the source and target domain, i.e., images and point clouds in this work, respectively.  $\y_i^s$ denotes the label of $\x_i^s$. We assume that the source and target domains are sampled from the distributions $\Dset_S(\x, \y)$ and $\Dset_T(\x, \y)$. 

For quality assessment, the ultimate goal of applying DA is to learn a \textbf{feature extractor} $G(\cdot)$ and a \textbf{regression module} $R(\cdot)$ to minimize the expected target risk, i.e.
\begin{align}
\label{eq0}
\varepsilon_{(\x_i^t,\y_i^t)\sim\Tset}[\Lset_{sim}\{R(G(\x_i^t)),\y_i^t)\}],
\end{align}
where $\Lset_{sim}\{\cdot\}$ is an index which can evaluate the discrepancy between the subjective and objective scores, such as Pearson linear correlation coefficient (PLCC), Spearman rank-order correlation coefficient (SROCC), Kendall rank-order correlation coefficient (KROCC) and root mean squared error (RMSE). 

Given $\z=G(\x)$, to achieve the goal of the domain-transfer quality assessment, confusing the feature distribution of the images $\Dset_S(\z)$ and the point clouds $\Dset_T(\z)$ is required, which can be achieved by aligning their  likelihood functions, i.e.
\begin{align}
\label{eq1}
\min||\mathcal{D}_S(\z \mid \y) - \mathcal{D}_T(\z \mid \y)||.
\end{align}

However, due to different low-level feature distributions directly caused by different distortion distributions, directly aligning these two likelihood functions is difficult. We resort to decomposing the optimization objectives to solve the issues.


\subsection{Optimization Objective Decomposition}

The preparatory experiment mentioned in Section \ref{sec:introduction} demonstrates the role of the distortion discrepancy in domain-transfer quality assessment. Specifically, the discrepancy between $\mathcal{D}_S(\z \mid \y)$ and $\mathcal{D}_T(\z \mid \y)$ can be explained by different low-level feature distributions related to different distortion manifestations. 





To reduce the transfer difficulty and emphasize common distortion patterns during cross-domain alignment, the prior distortion distribution is introduced into the DA optimization objective, i.e., Eq. \eqref{eq1}. As a result, the alignment target of Eq. \eqref{eq1} is decomposed into
\begin{align}\label{eq6.1}
\min||\mathcal{D}_S(\z \mid \y_d)-\mathcal{D}_T(\z \mid \y_d)||, 
\end{align}
and meanwhile regarding the biases in conversion from Eq. \eqref{eq1} to Eq. \eqref{eq6.1},
\begin{align}\label{eq5}
\min||\mathcal{D}_S(\z \mid \y)-\mathcal{D}_S(\z \mid \y_d)||. 
\end{align}


\subsection{Sub Objective 1: Distortion-guided Feature Alignment}

Eq. \eqref{eq6.1} requires to align the feature distributions based on the distortion distributions between the source domain and the target domain. Since the unknown distortion distribution in the target domain is needed in Eq. \eqref{eq6.1}, a new \textbf{classification module} $H(\cdot)$ is first introduced to recognize the distortion types $\y_d=H(\z)$. According to Theorem 4.1 in \cite{Zhao2019upperbound}, under different prior distributions, the cross-domain alignment requires not only learning domain invariant features and achieving minimum source domain error, but also optimizing the error bias between the source and target domain, denoted as
\begin{align}\label{eq6.2}
||{\varepsilon _S} - {\varepsilon _T}|| = ||{\mathcal{D}_S}({\widehat{\y_d}} \ne {\y_d}) - {\mathcal{D}_T}({\widehat{\y_d}}\ne {\y_d})||,
\end{align}
which is related to the distance between the two distributions and the distance of the label mapping functions between the two domains \cite{Zhao2019upperbound}, and where $\widehat{\y_d}$ signifies the predicted distortion type. Specifically, the upper bound of Eq. \eqref{eq6.2} can be calculated according to the error decomposition theorem \cite{Ben2010upperbound,Tachet2020GLS,Zhao2019upperbound} as follows
\begin{align}\label{eq7}
\begin{split}
  &||{\mathcal{D}_S}({\widehat{\y_d}} \ne {\y_d}) - {\mathcal{D}_T}({\widehat{\y_d}}\ne {\y_d})|| \leqslant \\ &\| {\mathcal{D}_S({{\y_d}}) - \mathcal{D}_T({{\y_d}})} \|\mathop {\max }\limits_{{{\y}_d} } {{\Dset}_S}(\widehat {{{\y}_d}} \ne {{\y}_d}\mid {{\y}_d}) \\
   &+ 2(|{{\y}_d}| - 1)\mathop {\max }\limits_{{\widehat{\y_d}} \ne y_d } ||{{\Dset}_S}( {\widehat {{{\y}_d}} \ne \y_d \mid {{\y}_d}} ) - {{\Dset}_T}( {\widehat {{{\y}_d}} \ne \y_d \mid {{\y}_d}} )|| ,
   \end{split}
\end{align}
where $|{{\y}_d}|$ represents the number of distortion types. $\| {\mathcal{D}_S({{\y_d}}) - \mathcal{D}_T({{\y_d}})} \|$ is a constant associated with the dataset distribution. $\mathop {\max }\limits_{{{\y}_d} } {{\Dset}_S}(\widehat {{{\y}_d}} \ne {{\y}_d}\mid {{\y}_d})$ signifies the maximum probability that the predicted distortion type does not match the true one, which can be optimized by the optimal distortion classification in the source domain. And $\mathop {\max }\limits_{{\widehat{\y_d}} \ne \y_d } ||{{\Dset}_S}( {\widehat {{{\y}_d}} \ne \y_d \mid {{\y}_d}} ) - {{\Dset}_T}( {\widehat {{{\y}_d}} \ne \y_d \mid {{\y}_d}} )|| $ measures the distance of the conditional distortion distribution $\widehat{\y_d}|\y_d$ between the source domain and the target domain, which leads to the only unknown distortion distribution in the target domain $\Dset_T(\y_d)$ that needs further processing.

To deal with the unknown distortion distribution in the target domain $\Dset_T(\y_d)$, a distribution weight is defined as
\begin{align}
\label{eq8}
\mathbf{w}_y=\frac{\mathcal{D}_T(\y_d)}{\mathcal{D}_S(\y_d)}.
\end{align}

Then according to the clustering structure theorem \cite{Tachet2020GLS}, 
$\mathop {\max }\limits_{{\widehat{\y_d}} \ne \y_d } ||{{\Dset}_S}( {\widehat {{{\y}_d}} \ne \y_d \mid {{\y}_d}} ) - {{\Dset}_T}( {\widehat {{{\y}_d}} \ne \y_d \mid {{\y}_d}} )|| $ reaches optimality as long as $\mathbf{w}_y$ can achieve a partition of $\z = { \cup _{{\y_d}}}{\z_{{\y_d}}}$ such that $\forall {\y_d}$, ${\Dset_S}(\z \in \z_{{\y_d}}|{\y_d}) = {\Dset_T}(\z \in \z_{{\y_d}}|{\y_d}) = 1$, which is consistent with the optimization objective of the classifier $H(\cdot)$. 

When Eq. \eqref{eq6.1} reaches optimum by ensuring the optimal $H(\cdot)$, we obtain
\begin{align}\label{eq8.1}
\mathcal{D}_T(\z)=\sum_{y_d \in \mathcal{Y}_d} \mathbf{w}_y \cdot \mathcal{D}_S(\z, \y_d)= \mathcal{D}_S^w(\z),
\end{align}
which demonstrates that to achieve Eq. \eqref{eq6.1}, aligning the \textbf{re-weighted feature distribution} of the source domain $\Dset_S^w(\z)$ based on the distortion distribution and the feature distribution of the target domain $\Dset_T(\z)$ is requred.

\subsection{Sub Objective 2: Quality-sensitive Maintenance}

Eq. \eqref{eq5} aligns the feature likelihood function with respect to quality $\mathcal{D}_S(\z \mid \y)$ and distortions $\mathcal{D}_S(\z \mid \y_d)$, which physically requires \textbf{consistent feature distributions} for quality prediction and distortion recognition. In other words, Eq. \eqref{eq5} requires to make the features $\z_i$ both quality-aware and distortion-aware, denoted as
\begin{align}
\min \sum\limits_i {||{z_i} - {z_i^q}|| + ||{z_i} - {z_i^d}||},
\end{align}
where $z_i^q$ indicates the quality-aware representation, and $z_i^d$ indicates the distortion-aware representation. 

To achieve this, we resort to constractive learning, denoted as 
\begin{align}
\min \sum\limits_{i,j \in {P_{q,d}}} {||{z_i} - {z_j}||}  + \max \sum\limits_{i,j \in {N_{q,d}}} {||{z_i} - {z_j}||},
\end{align}
where the positive samples $P_{q,d}$ satisfy both the quality-aware representation and distortion-aware representation and vice versa for negative samples $N_{q,d}$. Specifically, it aims to distill the quality-sensitive components related to common distortions among extracted features. The original features $\z=G(\x)$ are considered to contain the quality-aware and quality-unaware components. Further, the quality-aware components can be divided into distortion-related and distortion-unrelated components. If the distortion-related quality-aware feature components are extracted, the extracted feature distribution can be well applied to both quality prediction and distortion recognition although consuming a certain degree of fitting ability.





\section{Methodology}\label{sec:network}


The proposed DWIT-PCQA is illustrated in Figure \ref{fig:scheme}. Specifically, it has the point cloud preprocessing module, the feature generative network $G$, the quality-aware feature disentanglement module, the conditional-discriminative network $D$, the distortion classification network $H$, and the quality regression network $R$.

\begin{figure*}[pt]
\setlength{\abovecaptionskip}{0.cm}
\setlength{\belowcaptionskip}{-0.cm}
\centering
\includegraphics[width=1\linewidth]{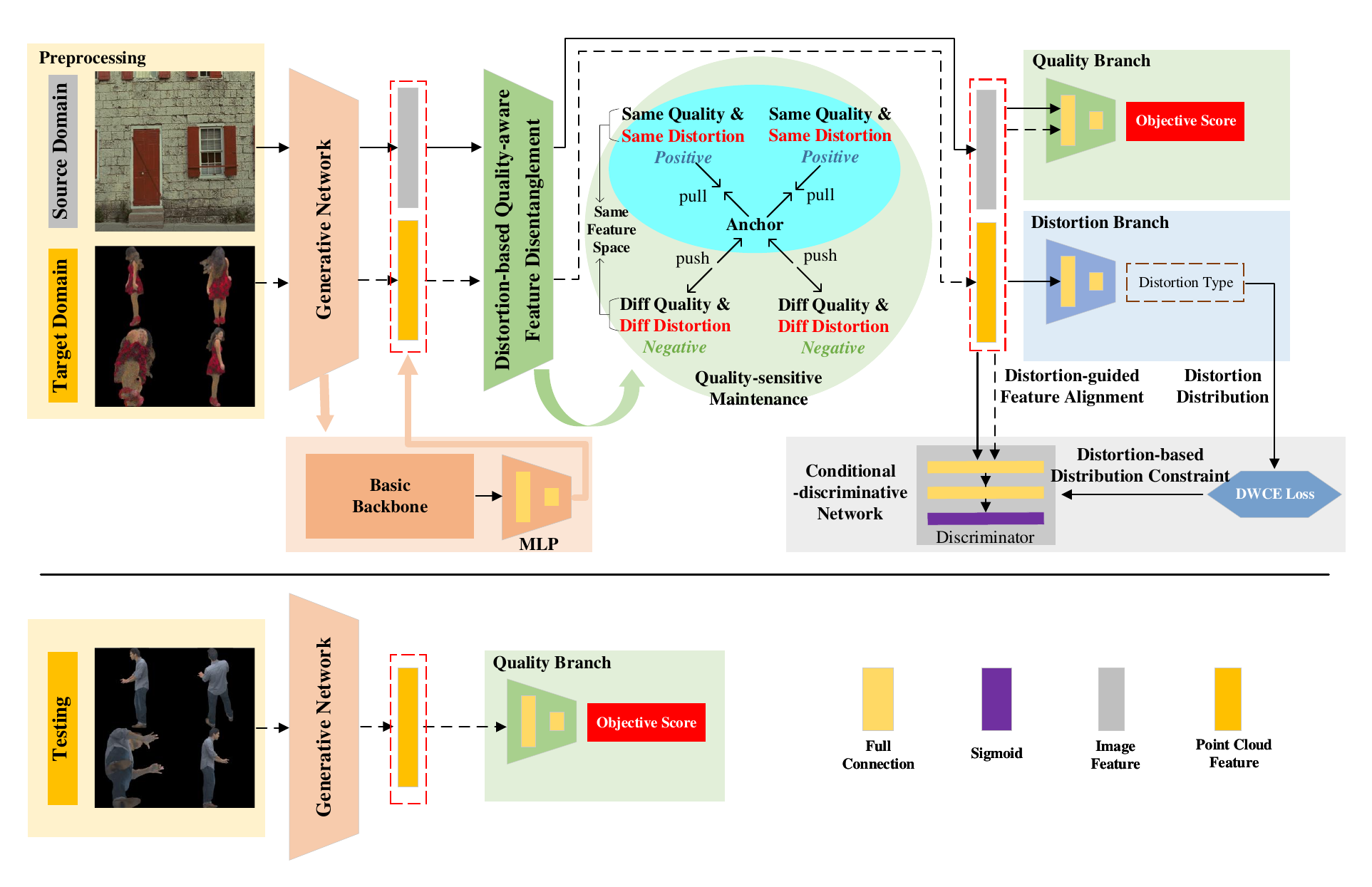}
\caption{An illustration of our proposed DWIT-PCQA. It consists of the following substeps: (a) the preprocessing module casts the point clouds and images into the same data format; (b) the generative network extracts representative features from both source and target domains; (c) the conditional-discriminative network aligns the feature likelihood functions with respect to distortions $\Dset(\z|\y_d)$ between the source domain and the target domain; (d) the quality-aware feature disentanglement module is applied to harmonize the conflicts of feature likelihood functions with respect to distortions $\Dset(\z|\y_d)$ and perceptual quality $\Dset(\z|\y)$; (e) the distortion classification network estimates the necessary distortion distribution for cross-domain feature alignment under biased distortions; (f) the quality regression network achieves quality prediction. }
\label{fig:scheme}
\end{figure*}

The preprocessing module is used to transfer point clouds to multi-perspective images, casting point clouds into the same data format as images. The feature generative network $G$ is used to extract the representative features for both the source domain and the target domain. Then based on the analysis in Section \ref{sec:Reformulation}, to handle the distortion discrepancy, we propose a distortion-based conditional-discriminative network $D$ to promote minimizing the feature likelihood function discrepancies with respect to distortions $\Dset(\z|\y_d)$, which aligns the feature distributions $\mathcal{D}_S^w(\z)$ re-weighted based on distortion distributions. The distortion distributions in the target domain are estimated by an introduced distortion classification branch $H$. Besides, we propose a quality-aware feature disentanglement to mitigate the destruction of the feature likelihood function with respect to quality $\Dset(\z|\y)$ during aligning the feature likelihood function with respect to distortions $\Dset(\z|\y_d)$ and to maintain the quality-sensitive representation during alignment with biased distortions. It is accomplished by contrastive learning with designed positive and negative pairs. The quality regression network $R$ predicts the final objective scores based on the aligned features.


 \subsection{Feature Extraction For Two Domains}
 \label{sec:featureextraction} 

To share a common feature encoder, the images and point clouds need to be cast into the same data format. Therefore, similar to \cite{Yang2022ITPCQA}, we first project the 3D point cloud onto the perpendicular planes of a cube. Then, the side projections are spliced together to form a multi-perspective image as the network input, as shown in Figure \ref{fig:scheme}. Images and spliced perspective views are resized into $224\times224$.

The feature generative network contains a basic backbone and a multilayer perceptron (MLP).  A weight-shared ResNet-50 backbone is adopted to extract the output features from the last layer for both the source domain and the target domain. Then the basic backbone is followed by the MLP which contains 2 fully-connected (FC) layers with channels $[2048,1024]$ and $[1024,256]$, to adjust the dimension of output features from 2048 to 256. This feature generative network is indicated as $G(\cdot)$.

\subsection{Distortion-guided Feature Alignment}
\label{sec:conditional}

The feature alignment can be conducted by fully confusing a discriminator so that it cannot distinguish whether the generated features are from the source domain, i.e., images, or the target domain, i.e., point clouds, leading to a similar cross-domain feature distribution. However, the bias in the prior distortion distribution suppresses the confusion effect for the discriminator. As analyzed in Section \ref{sec:Reformulation}, additional feature distribution re-weighting based on the prior distortion distribution is required between different domains.


\subsubsection{Distortion Weighted Cross Entropy}

To minimize domain discrepancy while considering distortion distributions, we design a new discriminator $D$, i.e., the conditional-discriminative network, where a distortion-based weight is incorporated to help align the unbalanced feature distribution with biased prior distortion distributions. The weighting factor introduced is to assign feature distributions with different weights based on the marginal distortion distributions $\mathcal{D}_S(\y_d)$ and $\mathcal{D}_T(\y_d)$, forming a distribution-weighted cross-entropy (DWCE) loss. 


Specifically, the output of the feature generative network, i.e., $\z=G(\x)$, is fed into a discriminator $D$. Considering that the unbalanced feature distribution should lead to the unbalanced feature alignment, we reward/punish the features of overlapping/non-overlapping distortion to tackle the biased marginal distortion distribution, which is formulated as
\begin{align}\label{eq:CCEL}
\begin{split}
\Lset_{DWCE} = -\mathbb{E}_{\x_i^s\sim\Dset_S}\w_{y_i}log[D(\z_i^s)]- \mathbb{E}_{\x_j^t\sim\Dset_T}log[1-D(\z_j^t)],
\end{split}
\end{align}
where $\w_{y}=\frac{\mathcal{D}_T(\y_d)}{\mathcal{D}_S(\y_d)}$ is the distribution weight which signifies the cross-correlation of the marginal distortion distribution between the source domain $\mathcal{D}_S(\y_d)$ and the target domain $\mathcal{D}_T(\y_d)$. $\w_{y}$ can be estimated based on the predicted results $\hat{\y_d}$ in the target domain according to \textbf{Theorem 1}.

\begin{theorem}
\label{theorem1}
If $\mathcal{D}_S(\z \mid \y_d)=\mathcal{D}_T(\z \mid \y_d)$ and $\z \rightarrow \y_d$ follows the same mapping $H$ on $S$ and $T$, $\mathcal{D}_T(\hat{\y_d})=\mathcal{D}_S(\hat{\y_d}, \y_d) \frac{\mathcal{D}_T(\y_d)}{\mathcal{D}_S(\y_d)}$.
\end{theorem}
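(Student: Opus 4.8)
The plan is to recognize Theorem~\ref{theorem1} as an instance of the classical label-shift (prior-shift) identity, and to prove it by first showing that the \emph{confusion} law $\mathcal{D}(\hat{\y_d}\mid\y_d)$ is domain-invariant and then applying the law of total probability in the target domain. Throughout I read the claimed equality as holding with an implicit summation over the true distortion classes $\y_d$, i.e.\ in the matrix form $\mathcal{D}_T(\hat{\y_d})=\sum_{\y_d}\mathcal{D}_S(\hat{\y_d},\y_d)\,\mathcal{D}_T(\y_d)/\mathcal{D}_S(\y_d)$. This is exactly the form needed to turn the two estimable objects---the source confusion matrix $\mathcal{D}_S(\hat{\y_d},\y_d)$ and the target prediction marginal $\mathcal{D}_T(\hat{\y_d})$ (obtainable from unlabeled target predictions)---into a linear system for the weight $\w_y$ used in Eq.~\eqref{eq8}.

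First I would establish the key lemma that, under the two hypotheses, the prediction-given-truth law coincides across domains, $\mathcal{D}_S(\hat{\y_d}\mid\y_d)=\mathcal{D}_T(\hat{\y_d}\mid\y_d)$. Since the predictor factors as $\hat{\y_d}=H(\z)$ with the \emph{same} map $H$ on both domains, the conditional $\mathcal{D}(\hat{\y_d}\mid\z)$ is a fixed, domain-independent kernel (a point mass if $H$ is deterministic). Marginalizing the feature out gives
\begin{align}
\mathcal{D}_T(\hat{\y_d}\mid\y_d)=\int \mathcal{D}(\hat{\y_d}\mid\z)\,\mathcal{D}_T(\z\mid\y_d)\,d\z,
\end{align}
and the identical expression with $\mathcal{D}_S$ in place of $\mathcal{D}_T$. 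The hypothesis $\mathcal{D}_S(\z\mid\y_d)=\mathcal{D}_T(\z\mid\y_d)$ then forces the two integrals to agree, which is the lemma. The only point needing care here is that $\hat{\y_d}$ depends on $(\z,\y_d)$ only through $\z$, which is precisely what $\hat{\y_d}=H(\z)$ guarantees.

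Second, I would expand the target prediction marginal by the law of total probability over the true classes, $\mathcal{D}_T(\hat{\y_d})=\sum_{\y_d}\mathcal{D}_T(\hat{\y_d}\mid\y_d)\,\mathcal{D}_T(\y_d)$, substitute the lemma to replace $\mathcal{D}_T(\hat{\y_d}\mid\y_d)$ by $\mathcal{D}_S(\hat{\y_d}\mid\y_d)$, and finally rewrite $\mathcal{D}_S(\hat{\y_d}\mid\y_d)=\mathcal{D}_S(\hat{\y_d},\y_d)/\mathcal{D}_S(\y_d)$ to reach
\begin{align}
\mathcal{D}_T(\hat{\y_d})=\sum_{\y_d}\mathcal{D}_S(\hat{\y_d},\y_d)\,\frac{\mathcal{D}_T(\y_d)}{\mathcal{D}_S(\y_d)},
\end{align}
which is the assertion once $\w_y=\mathcal{D}_T(\y_d)/\mathcal{D}_S(\y_d)$ is identified.

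I do not expect a genuine technical obstacle in the derivation, since it is a short conditioning-and-marginalization argument once the label-shift hypotheses are in place. The real subtlety is interpretive: first, pinning down that the stated equality is a contraction over $\y_d$ (the displayed line in the theorem suppresses the sum), and second, making explicit why the two hypotheses are exactly the ingredients that make the confusion law transfer---the shared classifier $H$ controls the $\hat{\y_d}\mid\z$ factor while the equality of class-conditional feature laws controls the $\z\mid\y_d$ factor. For the downstream use of the theorem I would also note that \emph{recovering} $\w_y$ from the resulting linear system additionally requires the source confusion matrix $[\mathcal{D}_S(\hat{\y_d},\y_d)]$ to be invertible; this is an identifiability condition for Eq.~\eqref{eq8}, not part of the stated identity itself.
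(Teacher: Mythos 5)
Your proof takes essentially the same route as the paper's: the paper also first establishes the domain-invariance of the confusion law, $\mathcal{D}_T(\hat{\y_d}\mid\y_d)=\mathcal{D}_S(\hat{\y_d}\mid\y_d)$, by factoring through $\z$ (using the shared map $H$ for the $\hat{\y_d}\mid\z$ factor and the hypothesis $\mathcal{D}_S(\z\mid\y_d)=\mathcal{D}_T(\z\mid\y_d)$ for the other factor), and then obtains $\mathcal{D}_T(\hat{\y_d})=\mathcal{D}_S(\hat{\y_d},\y_d)\,\mathcal{D}_T(\y_d)/\mathcal{D}_S(\y_d)$ by conditioning on $\y_d$. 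The only difference is presentational: you write the marginalizations over $\z$ and over $\y_d$ explicitly, whereas the paper suppresses them as bare products, so your reading of the statement as an implicit contraction over $\y_d$ is the correct interpretation of the same argument.
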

\begin{proof}
\begin{align}
\begin{split}
\mathcal{D}_T(\hat{\y_d} \mid \y_d)&= \mathcal{D}_T(\hat{\y_d} \mid \z, \y_d) \mathcal{D}_T(\z \mid \y_d)\\
& = \mathcal{D}_T(\hat{\y_d} \mid \z, \y_d) \mathcal{D}_S(\z \mid \y_d) \\
& = \mathcal{D}_S^H(\hat{\y_d} \mid \z) \mathcal{D}_S(\z \mid \y_d)\\
& = \mathcal{D}_S(\hat{\y_d} \mid \z, \y_d) \mathcal{D}_S(\z \mid \y_d)=\mathcal{D}_S(\hat{\y_d} \mid \y_d)
\end{split}
\end{align}
\begin{align}
\begin{split}
\mathcal{D}_T(\hat{\y_d}) & = \mathcal{D}_T(\hat{\y_d} \mid \y_d) \mathcal{D}_T(\y_d) \\
& = \mathcal{D}_S(\hat{\y_d} \mid \y_d) \mathcal{D}_T(\y_d)= \mathcal{D}_S(\hat{\y_d}, \y_d) \frac{\mathcal{D}_T(\y_d)}{\mathcal{D}_S(\y_d)}
\end{split}
\end{align}
\end{proof}

\textbf{Theorem}~\ref{theorem1} explains that $\w_y$ can be calculated based on the predicted distortion results $\widehat{\y_d}$ in the target domain under the optimal classifier $H(\cdot)$. Defining $\mathbf{C}$ as the discrete binary distribution of $H(\cdot)$, 
\begin{align}
\label{eq11}
\mathbf{C}=\mathcal{D}_S(\widehat{\y_d}, \y_d),
\end{align}
and $\boldsymbol{\mu}$ as the distribution of predictions in the target domain,
\begin{align}
\label{eq12}
\boldsymbol{\mu}=\mathcal{D}_T(\widehat{\y_d}),
\end{align}
$\w$ is calculated by $\mathbf{w}_y=\mathbf{C}^{-1} \boldsymbol{\mu}$, which can be approximately solved by the quadratic program \cite{Lipton2018Labelshift}, 
\begin{align}\label{eqQP}
\begin{array}{ll}
\underset{\mathbf{w}}{\operatorname{minimize}} & \frac{1}{2}\|\hat{\boldsymbol{\mu}}-\hat{\mathbf{C}} \mathbf{w}_y\|_2^2 \\
\text {subject to} & \mathbf{w}_y \geq 0, \mathbf{w}_y^T \mathcal{D}_S(\y_d)=1 .
\end{array}
\end{align}
where $\hat{\mathbf{C}}$ and $\hat{\boldsymbol{\mu}}$ signify the estimation of ${\mathbf{C}}$ and ${\boldsymbol{\mu}}$ with finite samples. \textbf{Algorithm \ref{alg:IWcompute}} details the calculation of $\w_{y}$.

\renewcommand{\algorithmicensure}{\textbf{Output:}}
\begin{algorithm}
\caption{Distortion-based Distribution Weight Computation}
\label{alg:IWcompute}
\begin{algorithmic}
\Require distortion labels in the source domain $\y_{d}^{s}$, predicted distortion results in the target domain $\widehat{\y_{d}^{t}}$
\Ensure distortion-based distribution weight $\w_{y}$
\State $\w_{y} \leftarrow 1$
\For {$epoch = 1$ to EpochNum} 
\State Compute the distortion distribution in the source domain ${\y_{d}^{s}}^{'}=normalize(sum(onehot(\y_{d}^{s})))$
\State Compute the distortion distribution in the target domain $\widehat{{\y_{d}^{t}}^{'}}=normalize(sum(onehot(\widehat{\y_{d}^{t}})))$
\State Obtain the distortion classification results in the target domain $\hat{\boldsymbol{\mu}}=\widehat{{\y_{d}^{t}}^{'}}$
\State Compute the discrete binary distribution of the distortion classification results in the source domain and the target domain $\hat{\mathbf{C}}=\widehat{{\y_{d}^{t}}^{'}}^T{\y_{d}^{s}}^{'}$
\State Update $\w_{y}$ by quadratic program $\w_{y}=\hat{\mathbf{C}}^{-1} \hat{\boldsymbol{\mu}} \leftarrow QP(\hat{\mathbf{C}},\hat{\boldsymbol{\mu}})$
\EndFor
\end{algorithmic}
\end{algorithm}

\subsubsection{Physical Meaning}

Physically, samples with close distortions in both the source and target domains, such as Gaussian noise and color noise, are emphasized to promote learning of cross-domain commonalities (e.g., the higher weights are given for distortion types with higher proportions in the target domain). Samples with unique image distortions in the source domain, such as JPEG2000 lossy compression, are punished to reduce the impact of divergences between different domains on training (e.g., the weights approaching 0 are given for distortions not present in the target domain). The samples with unique point cloud distortions in the target domain, such as V-PCC lossy compression, are still involved in feature alignment, since the target domain is a black box and only the distortions in the source domain can be known. These point cloud distortions are actually tried to be expressed as the combination of source domain distortions through $H$, ensuring effective utilization of information in the target domain. 



\begin{figure*}[t]
\setlength{\abovecaptionskip}{0.cm}
\setlength{\belowcaptionskip}{-0.cm}
	\centering
		\includegraphics[width=0.8\linewidth]{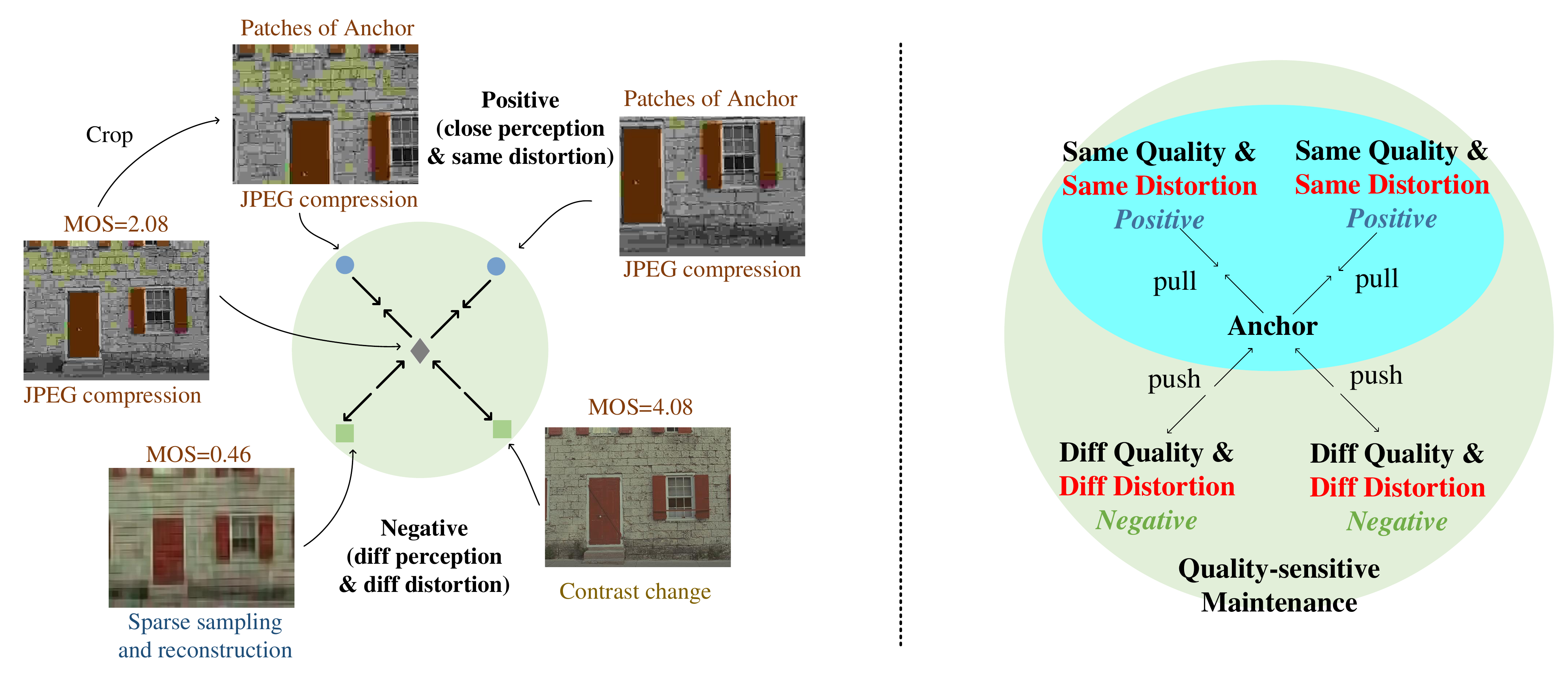}
	\caption{Positive and negative sample configuration for the quality-aware feature disentanglement. Positive and negative samples are designed to simultaneously achieve two distributions related to quality and distortion in the same feature space. The patches of the same image are set to be positive samples of anchor images (with close perception), and the images with the same content but different distortion types are set to be negative samples (with different perception). In this way, the extracted features are promoted to be both quality-ware and distortion-aware.}
	\label{fig:posandneg}
\end{figure*}

\subsection{Quality-sensitive Component Maintenance}
\label{sec:disentanglement}

To solve the offset of the prior distortion distribution of two media, distortion-guided alignment is conducted as section \ref{sec:conditional} proposed. However, considering that the perceptual quality is not equivalent to the distortion, forcing alignment of different feature distributions with respect to distortions $\Dset(\z|\y_d)$ between two media may destroy the feature likelihood functions with respect to perceptual quality $\Dset(\z|\y)$. As a result, the learning of cross-domain commonality suppresses the quality-sensitive representation, i.e., the required feature distributions for distortion recognition and quality assessment are not completely consistent. Therefore, to achieve Eq. \eqref{eq5}, we propose a quality-aware feature disentanglement to harmonize the conflict and enhance the quality-sensitive representation. 



The extracted features are considered to contain quality-aware and quality-unaware components. The quality-aware components can be divided into distortion-related and distortion-unrelated components. The proposed feature disentanglement module aims to extract distortion-related quality-aware features. In this way, the extracted feature distribution can be well applied to both the quality branch and distortion branch, although consuming a certain degree of fitting ability. To achieve this, we resort to contrastive learning to restrict the feature distributions related to both distortion and quality in the same feature space, as shown in Figure \ref{fig:posandneg}.

\subsubsection{Positive and Negative Sample Setting}

Suppose that the source domain has $N$ original images, represented by $\x^{s(1)},\x^{s(2)}, \cdots, \x^{s(N)}$. Each original image $\x^{s(n)}$ is degraded by $K$ distortion types, resulting in $\x^{s(n)}_{d_1},\x^{s(n)}_{d_2}, \cdots, \x^{s(n)}_{d_K}$. Then we extract $M$ patches from $\x^{s(n)}_{d_k}$ but with different locations, denoted $\x^{s(n)}_{d_k,1},\x^{s(n)}_{d_k,2}, \cdots, \x^{s(n)}_{d_k,M}$. The patches $\x^{s(n)}_{d_k,1},\x^{s(n)}_{d_k,2}, \cdots, \x^{s(n)}_{d_k,M}$ are considered as positive samples of the source image $\x^{s(n)}_{d_k}$ due to their almost identical perceived quality derived from consistent content and distortion type. And the whole images with different distortion types, denoted $\x^{s(n)}_{d_j}$, and meanwhile with different perceptual quality are considered as the negative samples of the anchor image $\x^{s(n)}_{d_k}$. Note that the same content and different distortion often result in different perception, so the symbol for negative samples is directly labeled as $\x^{s(n)}_{d_j}$ without any special markings.


The following matters are worth mentioning. First, patches $\x^{s(n)}_{d_k,m}$ and their whole images $\x^{s(n)}_{d_k}$ are fed together into the network to ensure that the network is able to obtain complete positive sample images for training. Second, to ensure that the perceptual quality among positive samples is as consistent as possible, the cropped patches are set to have a sufficient overlap area. 


\subsubsection{Physical Meaning}

As a result, the proposed feature disentanglement simultaneously implements two constraints on the feature distribution, leading to simultaneously satisfying two distributions on the same feature space. First, samples with similar quality are brought closer, while samples with different quality are pulled farther away. Meanwhile, the samples with the same distortion are clustered and vice versa. In this way, the extracted features are promoted to be meanwhile quality-aware and distortion-aware.


\subsubsection{Loss Function}

To promote quality-aware learning, we employ infoNCE loss \cite{Oord2018InfoNCE}. Let $G(\cdot)$ denote the feature generative network. Given the images $\x$ from the source domain, the extracted features can be represented by $\z=G(\x)$. After defining the similarity between two extracted features, i.e.
\begin{align}
h( {{{\mathbf{z}}_1},{{\mathbf{z}}_2}} ) = \exp ( {\frac{{{{\mathbf{z}}_1} \cdot {{\mathbf{z}}_2}}}{{{{\| {{{\mathbf{z}}_1}} \|}_2} \cdot {{\| {{{\mathbf{z}}_2}} \|}_2}}} \cdot \frac{1}{\tau }} ),
\end{align}
where $\tau$ is a hyper-parameter that controls the range of the results, we implement the InfoNCE loss to regularize the distribution of extracted features. The contrastive loss function for a positive pair with the anchor feature ${{\mathbf{z}}_{{d_k}}^{s(n)}}$ is formulated as:
\begin{equation}
\mathcal{L}_{dis}^{n,m} = - \log [ \frac{{h( \mathbf{z}_{d_k}^{s(n)},\mathbf{z}_{d_k,{m}}^{s(n)} )}}{{h( \mathbf{z}_{d_k}^{s(n)},\mathbf{z}_{d_k,{m}}^{s(n)} ) + \sum_{j \ne k} h ( \mathbf{z}_{d_k}^{s(n)},\mathbf{z}_{d_j}^{s(n)} )}} ],
\end{equation}
where $\mathbf{z}_{d_k,{m}}^{s(n)}$ signifies a cropped version of ${{\mathbf{z}}_{{d_k}}^{s(n)}}$, demonstrating different scales, and ${{\mathbf{z}}_{{d_j}}^{s(n)}}$ refers to a different distorted version of ${{\mathbf{z}}_{{d_k}}^{s(n)}}$ with the consistent content but different distortion type and different perception.

For a mini-batch, the loss function to regularize the feature distribution is as follows:
\begin{equation}
\mathcal{L}_{Fea} = \frac{1}{{n_sm_s}}\sum_{n = 1}^{n_s} \sum_{{m = 0}}^{m_s} {\mathcal{L}_{dis}^{n,m}},
\end{equation}
where $m_s$ represents the patch number, and $n_s$ represents the sample number in the source domain.

\subsection{Distortion Distribution Prediction}
\label{sec:classification}


Since the distortion distribution in the target domain is involved in the calculation of the similarity weight $\w_{y}$ in Eq. \eqref{eq8.1}, a distortion classification network $H$ is introduced to provide the necessary distortion distribution. $H$ contains twofold FC layers, i.e., channels $[2048,1024]$ and $[1024,n_d]$ where $n_d$ denotes the distortion types in the source domain, to predict the probability belonging to each distortion type. 

\subsubsection{Training by Source Domain}

The classification network is trained by the distortion annotation in the source domain. Given a set of images in the source domain denoted $\{\x^s_i, \y^{s}_{d_i}\}^{n_s}_{i=1}$ where $\y^s_{d_i}$ signifies the distortion type, we can obtain a latent feature $\z$ for each image via the feature generative network, i.e., $\{\x^s_i|\z_i^s, \y^s_{d_i}\}^{n_s}_{i=1}$, $\z_i^s=G(\x_i^s)$. We train the classification network via a cross-entropy loss function:
\begin{align}
{\mathcal{L}_{Cls}} =  - \sum\limits_{i = 1}^{{n_s}} {{\y_{d_i}^s}\log (\widehat{\y_{d_i}^s})},
\end{align}
where $\widehat{\y_{d_i}^s}=H(\z_i^s)$ denotes the predicted distortion type of samples in the source domain.

\subsubsection{Estimating Distortion Distribution in Target Domain}

The estimated distortion distribution obtained through $H$ in the target domain can be used to calculate the required weight $\w_{y}=\frac{\mathcal{D}_T(\y_d)}{\mathcal{D}_S(\y_d)}$ in Eq. \eqref{eq8.1} based on \textbf{Theorem}~\ref{theorem1} following \textbf{ Algorithm \ref{alg:IWcompute}}.




\subsection{Quality Regression}\label{sec:r}
We use a quality regression network 
$R$ containing twofold FC layers, i.e., channels $[2048,1024]$ and $[1024,1]$, to regress an objective score of the aligned latent feature generated jointly by the feature generative network $G$, the quality-aware feature disentanglement module, and conditional-discriminative network $D$.

The regression network is also trained using quality annotations in the source domain. Given a set of features of distorted images from the source domain, i.e., $\{\x^s_i|\z_i^s, \y_i^s\}^{n_s}_{i=1}$, $\z_i^s=G(\x_i^s)$ where $\y^s_{i}$ signifies the MOS of the distorted samples, the regression network is trained via
\begin{equation}\label{eq:r}
\Lset_{Reg}=\frac{1}{n_s}\sum_{i}^{n_s}(\widehat{\y_i^s}-\y^s_i)^2,
\end{equation}
where $\widehat{\y_i^s} = R(\z_i^s)$ denotes the predicted quality score of samples in the source domain. 

During testing, the quality scores of the samples $\x^t_i$ in the target domain can be directly predicted by $\widehat{\y_i^t} = R(\z_i^t)$ as shown in the bottom row of Fig. \ref{fig:scheme}.

\subsection{Overall Loss}

The overall loss is:
\begin{align}
\mathcal{L} = \lambda_1\mathcal{L}_{Fea} + \lambda_2\mathcal{L}_{Cls} + \lambda_3\mathcal{L}_{DWCE} + \lambda_4\mathcal{L}_{Reg},
\end{align}
where the parameters $\lambda_1$, $\lambda_2$, $\lambda_3$ and $\lambda_4$ are the weighting factors.

\begin{figure*}[pt]
\centering
\includegraphics[width=1\linewidth]{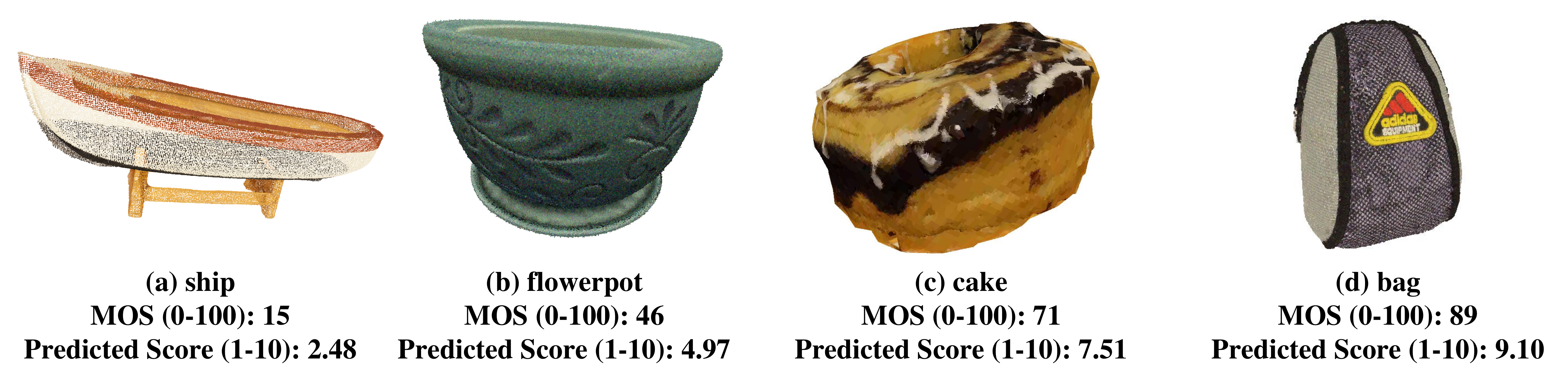}
\caption{Exemplary point clouds on the WPC dataset with the subjective MOS, the predicted quality score of the proposed DWIT-PCQA (KADID as the source domain and WPC as the target domain). The predicted quality score aligns with subjective perception even without using labels from the target domain for training.}
\label{fig:examples}
\end{figure*}

\section{Experiment}\label{sec:experiments}

\subsection{Implementation Details}

Our experiments are performed on NVIDIA 3090 GPUs. To test the proposed method, we use four independent databases, including two for images (TID2013 \cite{tid2013} and KADID-10k \cite{Lin2019KIAID10k}) and two for point clouds (SJTU-PCQA \cite{yang2020predicting} and WPC \cite{Su2019WPC}).

To ensure consistency between predicted scores and MOSs, a nonlinear Logistic-4 regression is applied to map the predicted scores to the same dynamic range, following the recommendations suggested by the Video Quality Experts Group (VQEG) \cite{Antkowiak2000vqeg,2006logistic5}. The employed evaluation metrics include PLCC and RMSE for prediction accuracy, SROCC and KROCC for prediction monotonicity. The higher PLCC, SROCC, or KROCC values indicate better performance, while the lower RMSE reflects better performance.

We implement our model using PyTorch. The ResNet-50 backbone is initialized with the pre-trained model on ImageNet \cite{Deng2009ImageNet}. The Adam optimizer is employed with a weight decay of 1e-4, and an initial learning rate of 5e-5. Natural and projected images are resized to $224\times224\times3$. The batch size for the source domain is set to 20, and the batch size for the target domain is set to 35. To implement feature disentanglement, both of the positive and negative sample numbers for an anchor are set to 2. The default training duration is set to 300 epochs. All weighting factors in the overall loss are set to 1.

For NR methods, following training-testing setups in~\cite{shan2022GPANet}, 5-fold cross-validation is adopted for the PCQA datasets (SJTU-PCQA and WPC) to reduce content bias. For SJTU-PCQA, the dataset is split into the training-testing sets with the ratio of 7:2 according to the reference point clouds for each fold, and the average performance in the testing sets is recorded. Similarly, for WPC, the training-testing ratio is set to 4:1. For FR metrics, their performance in the testing set for each fold is averaged and recorded. 

For image-to-point cloud transfer evaluation, the whole IQA datasets (e.g., TID2013 and KADID-10k) are used as the source domain for training. The PCQA datasets (e.g., SJTU-PCQA and WPC) follow the same training-testing partition used as the target domain during training and the testing set respectively, but the difference compared to general methods is that the quality labels of point clouds are not used in training and only the data itself is used for feature alignment.

\subsection{Image-to-point cloud Transfer Performance Comparison with Existing Methods}

Since images and point cloud projections share the same 2D format, IQA methods can also be used for image-to-point cloud transfer. In this section, the image-to-point cloud transfer performance of the proposed DWIT-PCQA is compared with the existing general IQA methods (e.g., DBCNN \cite{SCNN}, HyperIQA \cite{Su2020Hyper}), existing image-to-image transfer IQA methods (e.g., StyleAM \cite{Lu2022StyleAM} and Chen's method \cite{ChenDA}) and existing image-to-point cloud transfer methods IT-PCQA \cite{Yang2022ITPCQA}. The backbone of IT-PCQA is set to general SCNN \cite{SCNN}. Their backbones are initialized with the pre-trained model on ImageNet. The IQA datasets (e.g., TID2013 and KADID-10k) are treated as the training set (general methods)/source domain (transfer methods), and the PCQA datasets (e.g., SJTU-PCQA and WPC) are treated as the testing set (general methods)/the target domain and testing set (transfer methods). Note that the whole data and labels of the IQA dataset are used in training, and only the data itself of the PCQA dataset may be involved in feature alignment for transfer methods. The performance is evaluated following the training-testing setups mentioned above. The results are shown in Table \ref{tab:ItoPC}.

\begin{table}[htbp]
  \centering
  \caption{Image-to-point cloud transfer performance comparison. The general IQA and transfer methods are trained on the IQA dataset and tested on the PCQA dataset. Modal I means that the methods are designed to operate within a single image domain, modal I-to-I indicates that the methods operate on image-to-image transfer, and I-to-P represents the image-to-point cloud transfer. The proposed DWIT-PCQA exhibits the best transfer performance and significant improvement compared to the existing image-to-point cloud transfer method IT-PCQA. }
  \subtable[Performance of training on TID-2013 and testing on SJTU-PCQA.]{
    \begin{tabular}{l|c|cccc}
    \hline
          & modal & PLCC  & SROCC & KROCC & RMSE \\
    \hline
    DBCNN & I &0.405  & 0.399  & 0.310  & 1.370  \\
    HyperIQA & I & 0.371  & 0.361  & 0.277  & 1.625  \\
    StyleAM & I-to-I & 0.610  & 0.575  & 0.415  & 1.868  \\
    Chen's & I-to-I & 0.711  & 0.669  & 0.482  & 1.646  \\
    \hline
    IT-PCQA & I-to-PC & 0.717  & 0.678  & 0.494  & 1.636  \\
    DWIT-PCQA & I-to-PC & \textbf{0.859} & \textbf{0.828} & \textbf{0.640} & \textbf{1.197} \\
     Gain & I-to-PC & \textbf{$\uparrow$19.8\%} & \textbf{$\uparrow$22.1\%} & \textbf{$\uparrow$29.6\%} & \textbf{$\downarrow$26.8\%}\\
    \hline
    \end{tabular}%
    \label{tab:ItoPC1}%
  }
  \quad
  \subtable[Performance of training on TID2013 and testing on WPC.]{
    \begin{tabular}{l|c|cccc}
    \hline
        & modal  & PLCC  & SROCC & KROCC & RMSE \\
    \hline
    DBCNN & I & 0.268  & 0.280  & 0.202  & 25.266  \\
    HyperIQA & I & 0.209  & 0.229  & 0.163  & 25.539  \\
    StyleAM & I-to-I & 0.379  & 0.325  & 0.230  & 21.111  \\
    Chen's & I-to-I & 0.456  & 0.425  & 0.297  & 20.258 \\
    \hline
    IT-PCQA & I-to-PC & 0.439  & 0.437  & 0.265  & 20.470 \\
    DWIT-PCQA & I-to-PC & \textbf{0.628} & \textbf{0.617} & \textbf{0.454} & \textbf{17.783}  \\
    Gain & I-to-PC & \textbf{$\uparrow$43.1\%} & \textbf{$\uparrow$41.2\%} & \textbf{$\uparrow$71.3\%} & \textbf{$\downarrow$13.1\%} \\
    \hline
    \end{tabular}%
    \label{tab:ItoPC2}%
  }
  \quad
  \subtable[Performance of training on KADID-10k and testing on SJTU-PCQA.]{
    \begin{tabular}{l|c|cccc}
    \hline
        & modal  & PLCC  & SROCC & KROCC & RMSE \\
    \hline
    DBCNN & I & 0.371  & 0.368  & 0.272  & 2.566  \\
    HyperIQA & I & 0.349  & 0.369  & 0.283  & 2.609  \\
    StyleAM & I-to-I & 0.714  & 0.705  & 0.519  & 1.521  \\
    Chen's & I-to-I & 0.609  & 0.499  & 0.349  & 1.819 \\
    \hline
    IT-PCQA & I-to-PC & 0.724  & 0.660  & 0.466  & 1.628 \\
    DWIT-PCQA & I-to-PC & \textbf{0.827} & \textbf{0.812} & \textbf{0.625} & \textbf{1.284}  \\
    Gain & I-to-PC & \textbf{$\uparrow$14.2\%} & \textbf{$\uparrow$23.0\%} & \textbf{$\uparrow$34.1\%} & \textbf{$\downarrow$21.1\%} \\
    \hline
    \end{tabular}%
    \label{tab:ItoPC3}%
  }
  \quad
  \subtable[Performance of training on KADID-10k and testing on WPC.]{
    \begin{tabular}{l|c|cccc}
    \hline
        & modal  & PLCC  & SROCC & KROCC & RMSE \\
    \hline
    DBCNN & I & 0.259  & 0.272  & 0.195  & 26.571  \\
    HyperIQA & I & 0.221  & 0.240  & 0.172  & 26.641  \\
    StyleAM & I-to-I & 0.378  & 0.336  & 0.233  & 21.102  \\
    Chen's & I-to-I & 0.519  & 0.499  & 0.350  & 19.395 \\
    \hline
    IT-PCQA & I-to-PC & 0.560  & 0.539  & 0.383  & 18.758 \\
    DWIT-PCQA & I-to-PC & \textbf{0.703} & \textbf{0.714} & \textbf{0.537} & \textbf{15.971}  \\
    Gain & I-to-PC & \textbf{$\uparrow$25.5\%} & \textbf{$\uparrow$34.5\%} & \textbf{$\uparrow$40.2\%} & \textbf{$\downarrow$14.9\%} \\
    \hline
    \end{tabular}%
    \label{tab:ItoPC4}%
  }
  \label{tab:ItoPC}
\end{table}

We can see from Table \ref{tab:ItoPC} that: i) the proposed DWIT-PCQA exhibits the best transfer performance when generalized from IQA to PCQA datasets. DWIT-PCQA outperforms IT-PCQA~\cite{Yang2022ITPCQA} by about 20\% in terms of SROCC on SJTU-PCQA, and 40\% on WPC (trained with the same source domain), which demonstrates the effectiveness of the proposed alignment strategy; ii) directly transferring the IQA model from images to point clouds shows poor performance, indicating that the domain discrepancy between the source and target domains lead to the requirement for corresponding processing (i.e. feature alignment in transfer methods); iii) the existing transfer methods exhibit unsatisfactory performance. Among them, the style features in StyleAM are not generalized under huge domain discrepancies. The rudimentary alignment in Chen's method is insufficient to solve difficult image-to-point cloud transfer. The relaxation mechanism in IT-PCQA looses training when alignment violates feature-to-quality mapping, which reduces fitting ability.

Additionally, we present examples of distorted point clouds with their subjective MOS and predicted quality score generated by the proposed DWIT-PCQA in Fig. \ref{fig:examples}. The predicted quality scores align closely with subjective perceptions, despite being derived from unlabeled training. 


\subsection{Performance Comparison with General PCQA Methods which Require Labeled Training}
\label{sec:overallperformance}

In addition to comparing the proposed DWIT-PCQA with the existing image-to-point cloud transfer method IT-PCQA~\cite{Yang2022ITPCQA}, we also compare our DWIT-PCQA with prevalent general FR-PCQA metrics, including MSE-PSNR-P2point (M-p2po)~\cite{cignoni1998metro}, Hausdorff-PSNR-P2point (H-p2po)~\cite{cignoni1998metro}, MSE-PSNR-P2plane (M-p2pl)~\cite{Mekuria2016Evaluation}, Hausdorff-PSNR-P2plane (H-p2pl)~\cite{Mekuria2016Evaluation}, PSNRyuv~\cite{MPEGSoft}, PCQM~\cite{meynet2020pcmd}, GraphSIM~\cite{yang2020graphsim}, MPED~\cite{yang2021MPED} and pointSSIM~\cite{Alexiou2020pointSSIM}, and prevalent general NR-PCQA methods, including ResSCNN~\cite{Liu2022LSPCQA}, GPA-Net~\cite{shan2022GPANet}, and PQA-Net~\cite{Liu2021PQANet}. Note that these general NR-PCQA metrics need the quality labels of point clouds for training. For the transfer methods, we use TID2013 and KADID-10k as the source domain and treat SJTU-PCQA and WPC as the target domain, respectively. The results are shown in Table \ref{tab:overall}.

\begin{table*}[htbp]
  \centering
  \caption{Performance comparison with general PCQA methods. The general methods require labeled training, and the transfer methods are evaluated under unlabeled training. The proposed DWIT-PCQA presents competitive performance compared to conventional FR-PCQA and NR-PCQA metrics.}
    \begin{tabular}{c|l|c|cccc|cccc}
    \hline
    \multirow{2}{*}{\centering Type} & \multirow{2}{*}{\centering Method} & \multicolumn{1}{c|}{\multirow{2}{*}{\centering Requiring Labels}} & \multicolumn{4}{c|}{SJTU-PCQA} & \multicolumn{4}{c}{WPC} \\
\cline{4-11}          &       &       & PLCC  & SROCC & KROCC & RMSE  & PLCC  & SROCC & KROCC & RMSE \\
    \hline
    \multirow{9}[1]{*}{FR} & M-p2po & yes   & 0.889  & 0.803  & 0.623  & 1.093  & 0.586  & 0.567  & 0.414  & 18.452  \\
          & M-p2pl & yes   & 0.776  & 0.689  & 0.517  & 1.499  & 0.468  & 0.445  & 0.333  & 20.102  \\
          & H-p2po & yes   & 0.776  & 0.696  & 0.519  & 1.482  & 0.379  & 0.256  & 0.176  & 21.136  \\
          & H-p2pl & yes   & 0.752  & 0.686  & 0.506  & 1.566  & 0.369  & 0.313  & 0.219  & 21.231  \\
          & PSNRyuv & yes   & 0.743  & 0.737  & 0.550  & 1.581  & 0.579  & 0.563  & 0.401  & 18.597  \\
          & PCQM  & yes   & 0.795  & 0.869  & 0.681  & 2.383  & 0.705  & 0.750  & 0.571 & 22.891  \\
          & GraphSIM & yes   & 0.877  & 0.858  & 0.667  & 1.126  & 0.693  & 0.679  & 0.495  & 16.487  \\
          & MPED  & yes   & 0.907  & 0.890  & 0.717  & 0.997  & 0.692  & 0.682  & 0.512  & 16.426  \\
          & pointSSIM & yes   & 0.734  & 0.709  & 0.521  & 1.609  & 0.481  & 0.453  & 0.333  & 20.037  \\
          \hline
    \multirow{7}[3]{*}{NR} & ResSCNN  & yes   & 0.867  & 0.858  & 0.654  & 1.113  & 0.747  & 0.728  & 0.517  & 16.131  \\
          & GPA-Net & yes   & 0.864  & 0.855  & 0.632  & 1.058  & 0.722  & 0.704  & 0.497  & 15.784  \\
          & PQA-Net & yes   & 0.859  & 0.836  & 0.646  & 1.072  & 0.718  & 0.703  & 0.508  & 15.073  \\
\cline{2-11}          & IT-PCQA (TID2013) & no    & 0.717  & 0.678  & 0.494  & 1.636  & 0.439  & 0.437  & 0.265  & 20.470  \\
& IT-PCQA (KADID-10k) & no    & 0.724  & 0.660  & 0.466  & 1.628  & 0.560  & 0.539  & 0.383  & 18.758  \\
          & DWIT-PCQA (TID2013) & no    & 0.859 & 0.828 & 0.640 & 1.197 & 0.628 & 0.617 & 0.454 & 17.783 \\
          & DWIT-PCQA (KADID-10k) & no    & 0.827 & 0.812 & 0.625 & 1.284 & 0.703 & 0.714 & 0.537 & 15.971 \\
    \hline
    \end{tabular}%
  \label{tab:overall}%
\end{table*}%


We can see from Table \ref{tab:overall} that: i) compared to the best general metrics, the performance degradation of the proposed DWIT-PCQA is only about 7.0\% in terms of SROCC on SJTU-PCQA, and 4.8\% on WPC, which indicates that the proposed transfer metrics have been of great practical value; ii) although the quality of point clouds is not used for training, the proposed DWIT-PCQA still exhibits better performance than many general FR-PCQA and NR-PCQA metrics on the PCQA datasets, which indicates the benefits of prior knowledge in IQA for PCQA, and also highlights the importance of how to bridge the two media.

\subsection{Ablation Study}\label{sec:abla}

In this part, we illustrate the effectiveness of each loss function. Specifically, TID2013 is used as the source domain, and SJTU-PCQA is used as the target domain. Considering the interdependence between different modules, we adopted a progressive ablation study approach. We use $\Lset_{Reg}$ to train the network as a benchmark, which demonstrates the performance of directly predicting point cloud quality with training in images. In order to demonstrate the effect of direct feature alignment as in \cite{Yang2022ITPCQA}, we use $\Lset_{Reg}+\Lset_{DWCE}$ (w/o $\w_{y}$) as the loss function to repeat the trials where $\w_{y}$ in $\Lset_{DWCE}$ is set to 1. Then to evaluate the effect of the distortion-guided distribution alignment, $\Lset_{Reg}+\Lset_{DWCE}+\Lset_{Cls}$ (w/ $\w_{y}$) is adopted as the loss function to train the network where the calculation of $\w_{y}$ depends on $\Lset_{Cls}$. To demonstrate the effect of quality-aware feature disentanglement which is used to alleviate the disruption of feature-to-quality mapping caused by distortion-based feature alignment mentioned earlier, $\Lset_{Reg}+\Lset_{DWCE}+\Lset_{Cls}+\Lset_{Fea}$ generates the final performance of the network. Finally, to illustrate the impact of multitasking branches on the DA framework and the further role of quality-aware feature disentanglement, the performance between ${{\mathcal{L}}_{Reg}}+{{\mathcal{L}}_{Cls}}$ and ${{\mathcal{L}}_{Reg}}+{{\mathcal{L}}_{Cls}}+{{\mathcal{L}}_{Fea}}$ is compared. The results are shown in Table \ref{tab:ab_loss}.

\begin{table}[htbp]
\setlength\tabcolsep{3pt}
  \centering
  \caption{Model performance with different loss functions on TID2013 (source domain) and SJTU-PCQA (target domain).}
    \begin{tabular}{cccc|cccc}
    \hline
    \multicolumn{1}{l}{$\mathcal{L}_{Reg}$} & \multicolumn{1}{l}{$\mathcal{L}_{DWCE}$} & \multicolumn{1}{l}{$\mathcal{L}_{Cls}$} & \multicolumn{1}{l|}{$\mathcal{L}_{Fea}$} & \multicolumn{1}{l}{PLCC} & \multicolumn{1}{l}{SROCC} & \multicolumn{1}{l}{KROCC}& \multicolumn{1}{l}{RMSE}\\
    \hline
    \checkmark     & \XSolidBrush     & \XSolidBrush     & \XSolidBrush     & 0.684  & 0.667 & 0.495 & 1.679 \\
    \checkmark     & \checkmark (w/o $\w_{y}$)    & \XSolidBrush & \XSolidBrush & 0.711 & 0.704 & 0.517 & 1.581 \\
    \checkmark     & \checkmark (w/ $\w_{y}$)    & \checkmark     & \XSolidBrush &  0.778 &  0.749 & 0.548 & 1.458 \\
    \checkmark     & \checkmark     & \checkmark     & \checkmark     & \textbf{0.859} & \textbf{0.828} & \textbf{0.640} & \textbf{1.197}\\
    \hline
    \checkmark     & \XSolidBrush     & \checkmark     & \XSolidBrush     & 0.608  & 0.579 & 0.406 & 1.936 \\
    \checkmark     & \XSolidBrush     & \checkmark     & \checkmark     & 0.664  & 0.648 & 0.482 & 1.715 \\
    \hline
    \end{tabular}%
  \label{tab:ab_loss}%
\end{table}%

\begin{figure}[t]
\centering
  \subfigure[Irregular feature space of direct feature alignment.]{\includegraphics[width=1\linewidth]{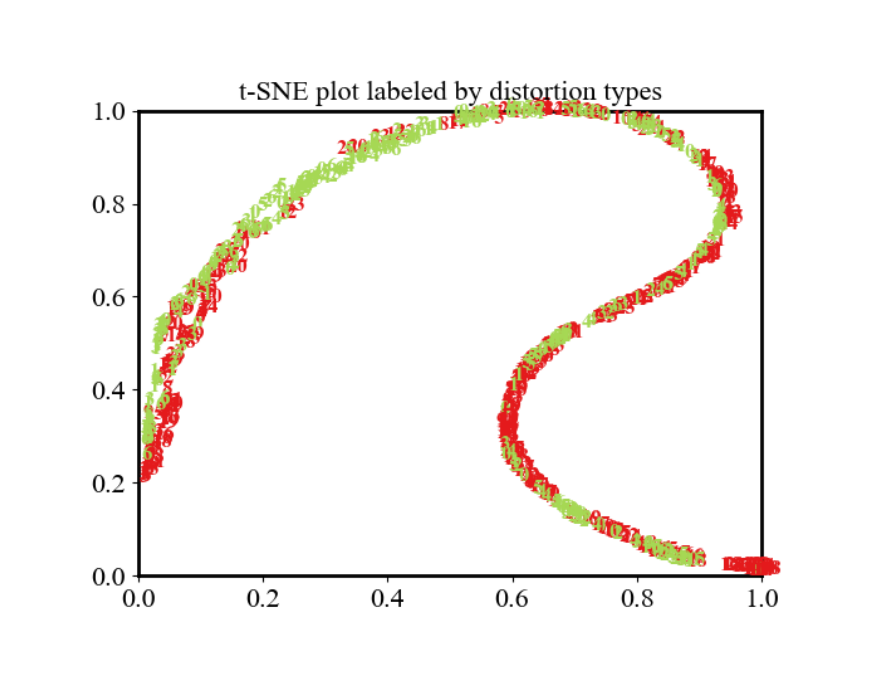}\label{sfig:tsne_DA_only}}
  \subfigure[Regular feature space of proposed distortion-based conditional feature alignment.]{\includegraphics[width=1\linewidth]{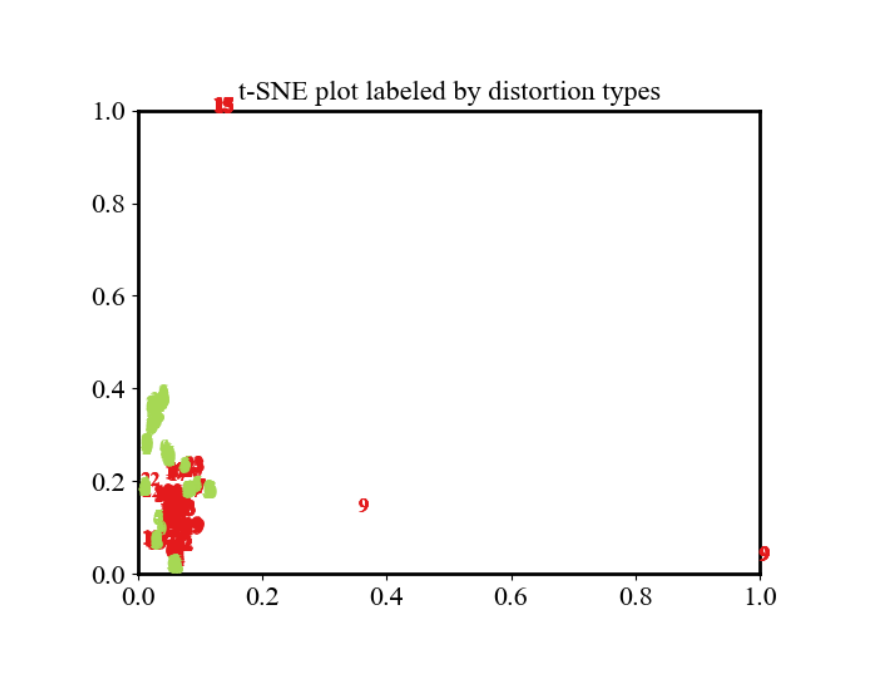}\label{sfig:tsne_DA_w}}
  \centering
\caption{T-SNE plot under different alignment conditions. Following similar setups in Fig. \ref{fig:tsne}, features of the direct alignment and proposed conditional alignment are extracted from the source domain (red) and the target domain (green). The involvement of unique distortions in direct alignment results in a specialized and irregular feature space which is not conducive to generalization. Instead, the proposed conditional alignment emphasizes the cross-domain common distortion patterns, which is beneficial for the model's generalization.}
  \label{fig:tsneComDA}
\end{figure}

We can see from Table \ref{tab:ab_loss} that: i) according to the performance of $\Lset_{Reg}+\Lset_{DWCE}$ (w/o $\w_{y}$),  the performance gain demonstrates that transferring the prior knowledge from images to point clouds for quality assessment task is feasible; ii) based on the performance of $\Lset_{Reg}+\Lset_{DWCE}+\Lset_{Cls}$ (w/ $\w_{y}$), the distortion-based importance-weighted alignment is necessary and effective to handle the domain discrepancy induced by distortion differences between two domains; iii) the proposed quality-aware feature alignment can maintain the quality-sensitive representation during alignment with biased distortions, which is also the necessary optimization objective derived from the formula derivation to promote further alignment; iv) the performance degradation of ${{\mathcal{L}}_{Reg}}+{{\mathcal{L}}_{Cls}}$ further illustrates that the different requirements for features of different task branches will suppress the feature alignment in DA. Considering the necessity of multitasking branches in the proposed DA framework, the proposed quality-aware feature disentanglement can alleviate the conflict of feature distribution between different task branches.


\subsection{Feature Space Analysis}

To further demonstrate the effectiveness of the proposed method, the t-SNE plots of extracted features under forced direct alignment and proposed conditional alignment are given in Fig. \ref{fig:tsneComDA}. Fig. \ref{sfig:tsne_DA_only} shows the t-SNE plot derived from direct alignment, where the features of two domains are aligned into a specialized and irregular feature space. The direct alignment forcefully fits the unique distortions and reduces potential generalization, posing challenges to the feature-to-quality mapping. Instead, the proposed conditional alignment emphasizes the cross-domain common distortion patterns, forming a regular feature space that is more conducive to generalizing the feature-to-quality mapping as shown in Fig. \ref{sfig:tsne_DA_w}.

\section{Conclusion}\label{sec:conclusion}

In this paper, we proposed a novel transfer quality assessment method called DWIT-PCQA for point clouds by leveraging the prior knowledge of images, which solves cross-media transfer quality assessment by decomposing the optimization objective of the conventional DA into two suboptimization functions with distortion as a transition. Through network implementation, the proposed method incorporates distortion-based feature distribution re-weighting into the unsupervised adversarial DA framework to achieve more reasonable alignment under distortion discrepancy. Furthermore, the proposed quality-aware feature disentanglement is implemented to maintain the sensitivity of features to perceptual quality during alignment with biased distortions. The proposed method further reveals the potential connection between different types of media in the field of quality assessment, and exhibits outstanding and reliable performance.

\bibliographystyle{IEEEtran}
\bibliography{manuscript}

\begin{thebibliography}{10}
\providecommand{\url}[1]{#1}
\csname url@samestyle\endcsname
\providecommand{\newblock}{\relax}
\providecommand{\bibinfo}[2]{#2}
\providecommand{\BIBentrySTDinterwordspacing}{\spaceskip=0pt\relax}
\providecommand{\BIBentryALTinterwordstretchfactor}{4}
\providecommand{\BIBentryALTinterwordspacing}{\spaceskip=\fontdimen2\font plus
\BIBentryALTinterwordstretchfactor\fontdimen3\font minus \fontdimen4\font\relax}
\providecommand{\BIBforeignlanguage}[2]{{%
\expandafter\ifx\csname l@#1\endcsname\relax
\typeout{** WARNING: IEEEtran.bst: No hyphenation pattern has been}%
\typeout{** loaded for the language `#1'. Using the pattern for}%
\typeout{** the default language instead.}%
\else
\language=\csname l@#1\endcsname
\fi
#2}}
\providecommand{\BIBdecl}{\relax}
\BIBdecl

\bibitem{lim2020Augmented}
S.~Lim, M.~Shin, and J.~Paik, ``Point cloud generation using deep local features for augmented and mixed reality contents,'' in \emph{IEEE International Conference on Consumer Electronics (ICCE'20)}, 2020, pp. 1--3.

\bibitem{ChenLFGW:20}
S.~Chen, B.~Liu, C.~Feng \emph{et~al.}, ``3d point cloud processing and learning for autonomous driving: Impacting map creation, localization, and perception,'' \emph{IEEE Signal Processing Magazine}, vol.~38, no.~1, pp. 68--86, 2021.

\bibitem{Rusu2011PCL}
R.~B. Rusu and S.~Cousins, ``3d is here: Point cloud library ({PCL}),'' in \emph{IEEE International Conference on Robotics and Automation (ICRA'11)}, 2011, pp. 1--4.

\bibitem{Mantiuk2012QAsurvey}
R.~K. Mantiuk, A.~Tomaszewska, and R.~Mantiuk, ``Comparison of four subjective methods for image quality assessment,'' \emph{Computer Graphics Forum}, vol.~31, no.~8, pp. 2478--2491, 2012.

\bibitem{liu2024D3PCQA}
Y.~Liu, Q.~Yang, Y.~Zhang \emph{et~al.}, ``Once-training-all-fine: No-reference point cloud quality assessment via domain-relevance degradation description,'' \emph{arXiv preprint arXiv:2307.01567}, 2023.

\bibitem{liu2024VCIP}
Y.~Liu, Q.~Yang, and Y.~Xu, ``Differentiable low-computation global correlation loss for monotonicity evaluation in quality assessment,'' in \emph{IEEE International Conference on Visual Communications and Image Processing (VCIP'24)}, 2024, pp. 1--5.

\bibitem{liu2022VCIP}
------, ``Reduced reference quality assessment for point cloud compression,'' in \emph{IEEE International Conference on Visual Communications and Image Processing (VCIP'22)}, 2022, pp. 1--5.

\bibitem{LQrate}
Q.~Liu, H.~Yuan, R.~Hamzaoui \emph{et~al.}, ``Reduced reference perceptual quality model with application to rate control for video-based point cloud compression,'' \emph{IEEE Transactions on Image Processing}, vol.~30, pp. 6623--6636, 2021.

\bibitem{scstmm}
Q.~Li, W.~Lin, J.~Xu, and Y.~Fang, ``Blind image quality assessment using statistical structural and luminance features,'' \emph{IEEE Transactions on Multimedia}, vol.~18, no.~12, pp. 2457--2469, 2016.

\bibitem{dnntip}
W.~Zhang, K.~Ma, G.~Zhai, and X.~Yang, ``Uncertainty-aware blind image quality assessment in the laboratory and wild,'' \emph{IEEE Transactions on Image Processing}, vol.~30, pp. 3474--3486, 2021.

\bibitem{Alexiou2020PointXR}
E.~Alexiou, N.~Yang, and T.~Ebrahimi, ``{PointXR}: A toolbox for visualization and subjective evaluation of point clouds in virtual reality,'' in \emph{International Conference on Quality of Multimedia Experience (QoMEX'20)}, 2020, pp. 1--6.

\bibitem{Javaheri2019IRPC}
A.~Javaheri, C.~Brites, F.~Pereira, and J.~Ascenso, ``Point cloud rendering after coding: Impacts on subjective and objective quality,'' \emph{IEEE Transactions on Multimedia}, vol.~23, pp. 4049--4064, 2021.

\bibitem{yang2020predicting}
Q.~Yang, H.~Chen, Z.~Ma \emph{et~al.}, ``Predicting the perceptual quality of point cloud: A 3d-to-2d projection-based exploration,'' \emph{IEEE Transactions on Multimedia}, vol.~23, pp. 3877--3891, 2021.

\bibitem{Su2019WPC}
H.~Su, Z.~Duanmu, W.~Liu \emph{et~al.}, ``Perceptual quality assessment of 3d point clouds,'' in \emph{IEEE International Conference on Image Processing (ICIP'19)}, 2019, pp. 3182--3186.

\bibitem{Yang2022ITPCQA}
Q.~Yang, Y.~Liu, S.~Chen \emph{et~al.}, ``No-reference point cloud quality assessment via domain adaptation,'' in \emph{IEEE/CVF Conference on Computer Vision and Pattern Recognition (CVPR'22)}, 2022, pp. 21\,179--21\,188.

\bibitem{Li2024DA}
A.~Li, J.~Wu, Y.~Liu, and L.~Li, ``Bridging the synthetic-to-authentic gap: Distortion-guided unsupervised domain adaptation for blind image quality assessment,'' in \emph{IEEE/CVF Conference on Computer Vision and Pattern Recognition (CVPR'24)}, 2024, pp. 28\,422--28\,431.

\bibitem{LIVE}
H.~Sheikh, Z.Wang, L.~Cormack, and A.~Bovik, ``{LIVE} image quality assessment database release 2,'' 2003, \url{http://live.ece.utexas.edu/research/quality}.

\bibitem{Sheikh2006CSIQ}
H.~R. Sheikh and A.~C. Bovik, ``Image information and visual quality,'' \emph{IEEE Transactions on Image Processing}, vol.~15, no.~2, pp. 430--444, 2006.

\bibitem{tid2013}
N.~Ponomarenko, L.~Jin, O.~Ieremeiev \emph{et~al.}, ``Image database {TID2013}: Peculiarities, results and perspectives,'' \emph{Signal Processing: Image Communication}, vol.~30, pp. 57--77, 2015.

\bibitem{Hosu2020konIQ}
V.~Hosu, H.~Lin, T.~Sziranyi, and D.~Saupe, ``{KonIQ-10k}: An ecologically valid database for deep learning of blind image quality assessment,'' \emph{IEEE Transactions on Image Processing}, vol.~29, pp. 4041--4056, 2020.

\bibitem{Lin2019KIAID10k}
H.~Lin, V.~Hosu, and D.~Saupe, ``{KADID-10k}: A large-scale artificially distorted {IQA} database,'' in \emph{International Conference on Quality of Multimedia Experience (QoMEX'19)}, 2019, pp. 1--3.

\bibitem{2D3Dre}
P.~Mandikal, N.~K~L, and R.~Venkatesh~Babu, ``{3D-PSRNet}: Part segmented 3d point cloud reconstruction from a single image,'' in \emph{European Conference on Computer Vision Workshops (ECCVW'18)}, 2018.

\bibitem{2D3Dtracking}
E.~Marchand, P.~Bouthemy, F.~Chaumette, and V.~Moreau, ``Robust real-time visual tracking using a 2d-3d model-based approach,'' in \emph{IEEE International Conference on Computer Vision (ICCV'99)}, vol.~1, 1999, pp. 262--268.

\bibitem{NIPS2016_45fbc6d3}
K.~Bousmalis, G.~Trigeorgis, N.~Silberman \emph{et~al.}, ``Domain separation networks,'' in \emph{Advances in Neural Information Processing Systems (NeurIPS'16)}, vol.~29, 2016.

\bibitem{pmlr-v37-ganin15}
Y.~Ganin and V.~Lempitsky, ``Unsupervised domain adaptation by backpropagation,'' in \emph{International Conference on Machine Learning (ICML'15)}, 2015, pp. 1180--1189.

\bibitem{Tzeng_2017_CVPR}
E.~Tzeng, J.~Hoffman, K.~Saenko, and T.~Darrell, ``Adversarial discriminative domain adaptation,'' in \emph{IEEE/CVF Conference on Computer Vision and Pattern Recognition (CVPR'17)}, 2017, pp. 7167--7176.

\bibitem{Su2020Hyper}
S.~Su, Q.~Yan, Y.~Zhu \emph{et~al.}, ``Blindly assess image quality in the wild guided by a self-adaptive hyper network,'' in \emph{IEEE/CVF Conference on Computer Vision and Pattern Recognition (CVPR'20)}, 2020.

\bibitem{liu2019lowlevel}
Y.~Liu, K.~Gu, S.~Wang \emph{et~al.}, ``Blind quality assessment of camera images based on low-level and high-level statistical features,'' \emph{IEEE Transactions on Multimedia}, vol.~21, no.~1, pp. 135--146, 2019.

\bibitem{Mekuria2016Evaluation}
R.~Mekuria, Z.~Li, C.~Tulvan, and P.~Chou, ``Evaluation criteria for point cloud compression,'' \emph{ISO/IEC MPEG w16332, Geneva, Switzerland}, 2016.

\bibitem{tian2017geometric}
D.~Tian, H.~Ochimizu, C.~Feng \emph{et~al.}, ``Geometric distortion metrics for point cloud compression,'' in \emph{Int. Conf. Image Processing (ICIP'17)}, 2017, pp. 3460--3464.

\bibitem{torlig2018novel}
E.~M. Torlig, E.~Alexiou, T.~A. Fonseca \emph{et~al.}, ``A novel methodology for quality assessment of voxelized point clouds,'' in \emph{Applications of Digital Image Processing XLI}, vol. 10752.\hskip 1em plus 0.5em minus 0.4em\relax Int. Soc. Optics and Photonics, 2018, p. 107520I.

\bibitem{alexiou2017subjective}
E.~Alexiou, ``On subjective and objective quality evaluation of point cloud geometry,'' in \emph{International Conference on Quality of Multimedia Experience (QoMEX'17)}, 2017, pp. 1--3.

\bibitem{alexiou2017performance}
E.~Alexiou and T.~Ebrahimi, ``On the performance of metrics to predict quality in point cloud representations,'' in \emph{Applications of Digital Image Processing XL}, vol. 10396, 2017, p. 103961H.

\bibitem{meynet2020pcqm}
G.~Meynet, Y.~Nehm{\'e}, J.~Digne, and G.~Lavou{\'e}, ``Pcqm: A full-reference quality metric for colored 3d point clouds,'' in \emph{International Conference on Quality of Multimedia Experience (QoMEX'20)}, 2020.

\bibitem{yang2020inferring}
Q.~Yang, Z.~Ma, Y.~Xu \emph{et~al.}, ``Inferring point cloud quality via graph similarity,'' \emph{IEEE Trans. Pattern Analysis and Machine Intelligence}, pp. 1--1, 2020.

\bibitem{Zhang2021MSGRAPHSIM}
\BIBentryALTinterwordspacing
Y.~Zhang, Q.~Yang, and Y.~Xu, ``Ms-graphsim: Inferring point cloud quality via multiscale graph similarity,'' in \emph{Proceedings of the 29th ACM Int. Conf. Multimedia (ACMMM'21)}.\hskip 1em plus 0.5em minus 0.4em\relax New York, NY, USA: Association for Computing Machinery, 2021, p. 1230–1238. [Online]. Available: \url{https://doi.org/10.1145/3474085.3475294}
\BIBentrySTDinterwordspacing

\bibitem{yang2021MPED}
Q.~Yang, Y.~Zhang, S.~Chen \emph{et~al.}, ``{MPED}: Quantifying point cloud distortion based on multiscale potential energy discrepancy,'' \emph{IEEE Transactions on Pattern Analysis and Machine Intelligence}, vol.~45, no.~5, pp. 6037--6054, 2023.

\bibitem{javaheri2021PTD}
A.~Javaheri, C.~Brites, F.~Pereira, and J.~Ascenso, ``A point-to-distribution joint geometry and color metric for point cloud quality assessment,'' in \emph{Int. Workshop on Multimedia Signal Processing (MMSP'21)}, 2021, pp. 1--6.

\bibitem{alexiou2018pointangular}
E.~Alexiou, ``Point cloud quality assessment metric based on angular similarity,'' in \emph{Int. Conf. Multimedia and Expo (ICME'18)}, 2018, pp. 1--6.

\bibitem{javaheri2020haus}
A.~Javaheri, C.~Brites, F.~Pereira, and J.~Ascenso, ``A generalized {Hausdorff} distance based quality metric for point cloud geometry,'' in \emph{International Conference on Quality of Multimedia Experience (QoMEX'20)}, 2020, pp. 1--6.

\bibitem{viol2020acolor}
I.~Viola, S.~Subramanyam, and P.~Cesar, ``A color-based objective quality metric for point cloud contents,'' in \emph{International Conference on Quality of Multimedia Experience (QoMEX'20)}, 2020, pp. 1--6.

\bibitem{Alexiou2020pointSSIM}
E.~Alexiou and T.~Ebrahimi, ``Towards a point cloud structural similarity metric,'' in \emph{IEEE International Conference on Multimedia and Expo Workshops (ICMEW'20)}, 2020, pp. 1--6.

\bibitem{javaheri2021JPC}
A.~Javaheri, C.~Brites, F.~Pereira, and J.~Ascenso, ``Joint geometry and color projection-based point cloud quality metric,'' \emph{IEEE Access}, vol.~10, pp. 90\,481--90\,497, 2022.

\bibitem{Tao2021PMBVQA}
W.~Tao, G.~Jiang, Z.~Jiang, and M.~Yu, ``Point cloud projection and multi-scale feature fusion network based blind quality assessment for colored point clouds,'' in \emph{ACM International Conference on Multimedia}, 2021, p. 5266–5272.

\bibitem{Chetouani2021nrpcqa}
A.~Chetouani, M.~Quach, G.~Valenzise, and F.~Dufaux, ``Deep learning-based quality assessment of 3d point clouds without reference,'' in \emph{IEEE International Conference on Multimedia and Expo Workshops (ICMEW'21}, 2021, pp. 1--6.

\bibitem{Liu2021PQANet}
Q.~Liu, H.~Yuan, H.~Su \emph{et~al.}, ``{PQA-Net}: Deep no reference point cloud quality assessment via multi-view projection,'' \emph{IEEE Transactions on Circuits and Systems for Video Technology}, vol.~31, no.~12, pp. 4645--4660, 2021.

\bibitem{Chai2024PlainPCQA}
X.~Chai, F.~Shao, B.~Mu \emph{et~al.}, ``{Plain-PCQA}: No-reference point cloud quality assessment by analysis of plain visual and geometrical components,'' \emph{IEEE Transactions on Circuits and Systems for Video Technology}, vol.~34, no.~7, pp. 6207--6223, 2024.

\bibitem{liu2020LSPCQA}
Y.~{Liu}, Q.~{Yang}, and Y.~{Xu}, ``Point cloud quality assessment: Large-scale dataset construction and learning-based no-reference approach,'' \emph{arXiv preprint arXiv:2012.11895}, 2020.

\bibitem{Fan2022Video}
Y.~Fan, Z.~Zhang, W.~Sun \emph{et~al.}, ``A no-reference quality assessment metric for point cloud based on captured video sequences,'' in \emph{Int. Workshop on Multimedia Signal Processing (MMSP'22)}, 2022, pp. 1--5.

\bibitem{Zhang2022Video}
Z.~Zhang, W.~Sun, Y.~Zhu \emph{et~al.}, ``Evaluating point cloud from moving camera videos: A no-reference metric,'' \emph{IEEE Transactions on Multimedia}, pp. 1--13, 2023.

\bibitem{shan2022GPANet}
Z.~Shan, Q.~Yang, R.~Ye \emph{et~al.}, ``{GPA-Net}:no-reference point cloud quality assessment with multi-task graph convolutional network,'' \emph{IEEE Transactions on Visualization and Computer Graphics}, pp. 1--13, 2023.

\bibitem{Wu2024Hypergraph}
W.~Chen, Q.~Jiang, W.~Zhou \emph{et~al.}, ``Dynamic hypergraph convolutional network for no-reference point cloud quality assessment,'' \emph{IEEE Transactions on Circuits and Systems for Video Technology}, vol.~34, no.~10, pp. 10\,479--10\,493, 2024.

\bibitem{Zhang2022MMPCQA}
Z.~Zhang, W.~Sun, X.~Min \emph{et~al.}, ``{MM-PCQA}: Multi-modal learning for no-reference point cloud quality assessment,'' in \emph{International Joint Conference on Artificial Intelligence}, 2023, pp. 1--1.

\bibitem{Tzeng2014MMD}
T.~Eric, H.~Judy, Z.~Ning \emph{et~al.}, ``Deep domain confusion:maximizing for domain invariance,'' \emph{arXiv preprint arXiv:1412.3474}, 2014.

\bibitem{Sun2016Corre}
S.~Baochen and S.~Kate, ``Deep coral: Correlation alignment for deep domain adaptation,'' in \emph{European Conf. Computer Vision (ECCV'16)}.\hskip 1em plus 0.5em minus 0.4em\relax Springer, 2016, pp. 443--450.

\bibitem{Zhuang2015kl}
Z.~Fuzhen, C.~Xiaohu, L.~Ping \emph{et~al.}, ``Supervised representation learning: Transfer learning with deep autoencoders,'' in \emph{Int. Joint Conf. Artificial Intelligence (IJCAI'15)}, 2015.

\bibitem{ChenDA}
B.~Chen, H.~Li, H.~Fan, and S.~Wang, ``No-reference screen content image quality assessment with unsupervised domain adaptation,'' \emph{IEEE Transactions on Image Processing}, vol.~30, pp. 5463--5476, 2021.

\bibitem{Chen2021DA}
P.~Chen, L.~Li, J.~Wu \emph{et~al.}, ``Unsupervised curriculum domain adaptation for no-reference video quality assessment,'' in \emph{Proceedings of the IEEE/CVF Int. Conf. Computer Vision Workshops (ICCVW'21)}, 2021, pp. 5158--5167.

\bibitem{Lu2022StyleAM}
Y.~Lu, X.~Li, J.~Liu, and Z.~Chen, ``{StyleAM}: Perception-oriented unsupervised domain adaption for non-reference image quality assessment,'' \emph{arXiv preprint arXiv:2207.14489}, 2022.

\bibitem{daICCVws}
K.~Saleh, A.~Abobakr, M.~Attia \emph{et~al.}, ``Domain adaptation for vehicle detection from bird's eye view lidar point cloud data,'' in \emph{Proceedings of the IEEE/CVF Int. Conf. Computer Vision Workshops (ICCVW'19)}, Oct 2019.

\bibitem{PCRE}
P.~Mandikal and V.~B. Radhakrishnan, ``Dense 3d point cloud reconstruction using a deep pyramid network,'' in \emph{2019 IEEE Winter Conf. Applications of Computer Vision (WACV'19)}, 2019, pp. 1052--1060.

\bibitem{PCREG}
H.~You, Y.~Feng, J.~Rongrong, and Y.~Gao, ``Pvnet: A joint convolutional network of point cloud and multi-view for 3d shape recognition,'' in \emph{Proceedings of the 26th ACM Int. Conf. Multimedia (ACMMM'18)}, 2018, pp. 1310--1318.

\bibitem{Li2023Reconstruction}
C.~Li, L.~Zhou, H.~Jiang \emph{et~al.}, ``{Hybrid-MVS}: Robust multi-view reconstruction with hybrid optimization of visual and depth cues,'' \emph{IEEE Transactions on Circuits and Systems for Video Technology}, vol.~33, no.~12, pp. 7630--7644, 2023.

\bibitem{Zhao2019upperbound}
H.~Zhao, R.~T.~D. Combes, K.~Zhang, and G.~Gordon, ``On learning invariant representations for domain adaptation,'' in \emph{International Conference on Machine Learning (ICML'19)}, 2019, pp. 7523--7532.

\bibitem{Ben2010upperbound}
S.~Ben-David, J.~Blitzer, K.~Crammer \emph{et~al.}, ``A theory of learning from different domains,'' \emph{Machine Learning}, vol.~79, no.~1, pp. 151--175, 2010.

\bibitem{Tachet2020GLS}
R.~Tachet~des Combes, H.~Zhao, Y.-X. Wang, and G.~J. Gordon, ``Domain adaptation with conditional distribution matching and generalized label shift,'' in \emph{Advances in Neural Information Processing Systems (NeurIPS'20)}, vol.~33, 2020, pp. 19\,276--19\,289.

\bibitem{Lipton2018Labelshift}
Z.~Lipton, Y.-X. Wang, and A.~Smola, ``Detecting and correcting for label shift with black box predictors,'' in \emph{International Conference on Machine Learning (ICML'18)}, 2018, pp. 3122--3130.

\bibitem{Oord2018InfoNCE}
A.~van~den Oord, Y.~Li, and O.~Vinyals, ``Representation learning with contrastive predictive coding,'' \emph{arXiv preprint arXiv:1807.03748}, 2018.

\bibitem{Antkowiak2000vqeg}
J.~Antkowiak, J.~Baina, V.~Baroncini \emph{et~al.}, ``Final report from the video quality experts group on the validation of objective models of video quality assessment,'' 2000.

\bibitem{2006logistic5}
H.~R. {Sheikh} and A.~C. {Bovik}, ``Image information and visual quality,'' \emph{IEEE Transactions on Image Processing}, vol.~15, no.~2, pp. 430--444, 2006.

\bibitem{Deng2009ImageNet}
J.~Deng, W.~Dong, R.~Socher \emph{et~al.}, ``Imagenet: A large-scale hierarchical image database,'' in \emph{IEEE/CVF Conference on Computer Vision and Pattern Recognition (CVPR'09)}, 2009, pp. 248--255.

\bibitem{SCNN}
W.~Zhang, K.~Ma, J.~Yan \emph{et~al.}, ``Blind image quality assessment using a deep bilinear convolutional neural network,'' \emph{IEEE Trans. Circuits and Systems for Video Technology}, vol.~30, no.~1, pp. 36--47, 2020.

\bibitem{cignoni1998metro}
P.~Cignoni, C.~Rocchini, and R.~Scopigno, ``Metro: measuring error on simplified surfaces,'' \emph{Computer Graphics Forum}, vol.~17, no.~2, pp. 167--174, 1998.

\bibitem{MPEGSoft}
MPEG, ``{MPEG} reference software,'' \url{https://git.mpeg.expert/MPEG/3dgh/v-pcc/software/mpeg-pcc-dmetric}, 2023.

\bibitem{meynet2020pcmd}
G.~Meynet, Y.~Nehmé, J.~Digne, and G.~Lavoué, ``{PCQM}: A full-reference quality metric for colored 3d point clouds,'' in \emph{International Conference on Quality of Multimedia Experience (QoMEX'20)}, 2020, pp. 1--6.

\bibitem{yang2020graphsim}
Q.~Yang, Z.~Ma, Y.~Xu \emph{et~al.}, ``Inferring point cloud quality via graph similarity,'' \emph{IEEE Transactions on Pattern Analysis and Machine Intelligence}, vol.~44, no.~6, pp. 3015--3029, 2022.

\bibitem{Liu2022LSPCQA}
Y.~Liu, Q.~Yang, Y.~Xu, and L.~Yang, ``Point cloud quality assessment: Dataset construction and learning-based no-reference metric,'' \emph{ACM Transactions on Multimedia Computing Communications and Applications}, vol.~19, no.~2s, pp. 1--26, 2023.

\end{thebibliography}


\end{document}